\newtheorem{assumption}{Assumption}
\newtheorem{theorem}{Theorem}
\newtheorem*{theorem-non}{Theorem}
\newtheorem{lemma}[theorem]{Lemma}
\newtheorem{definition}{Definition}
\newtheorem{proposition}{Proposition}
\newtheorem*{proposition-non}{Proposition}
\title{
Data Games: A Game-Theoretic Approach to Swarm Robotic Data Collection
}
\author{Oguzhan Akcin$^{1}$, Po-han Li$^{1}$, Shubhankar Agarwal$^{1}$ and Sandeep P. Chinchali$^1$ 
\thanks{$^{1}$ Department of Electrical and Computer Engineering (ECE), The University of Texas at Austin, Austin, TX
{\tt\small \{oguzhanakcin,pohanli\}@utexas.edu}, {\tt\small \{somi.agarwal,sandeepc\}@utexas.edu}}}%
\begin{document}


\newcommand{\somi}[1] {{\color{red} \textbf{[Somi]: #1}}}
\newcommand{\pohan}[1] {{\color{blue} \textbf{[Po-han]: #1}}}
\newcommand{\rebuttal}[1] {{\color{black} #1}}

\newcommand{\xit}{x_i^t}
\newcommand{\yhatit}{\hat{y}_i^t}
\newcommand{\yit}{y_i^t}
\newcommand{\action}[2]{a_{#2}^{#1}}
\newcommand{\networkparam}[1]{\theta^{#1}}
\newcommand{\numdataforclass}{N_y_j}
\newcommand{\clouddatasetr}{\mathcal{D}_c^r}
\newcommand{\devicedataset}[2]{\mathcal{D}_{#2}^{#1}}
\newcommand{\devicedatasetri}{\rho_{\mathcal{D}_i^r}}
\newcommand{\yhatclass}{\hat{y}_{N_{\text{class}}}}
\newcommand{\yclass}{y_{N_{\text{class}}}}
\newcommand{\datasettarget}{\rho_{\mathcal{D}_{\text{target}}}  }
\newcommand{\condhy}[2]{p_i^r(\hat{y}_{#1} | y_{#2} )}
\newcommand{\condhyni}[2]{p^r(\hat{y}_{#1} | y_{#2} )}
\newcommand{\condyh}[2]{p_i^r( y_{#1} | \hat{y}_{#2} )}
\newcommand{\clouddataset}[1]{\rho_{\mathcal{D}_{c}^{#1}}}
\newcommand{\kl}[1]{\mathcal{L}(\clouddataset{#1} , \datasettarget)}
\newcommand{\ndevice}{N_{\text{robot}}}
\newcommand{\nclass}{N_{\text{class}}}
\newcommand{\ncache}{N_{\text{cache}}}
\newcommand{\reals}{\mathbb{R}}
\newcommand{\realsplus}{\mathbb{R}_{+}}

\newcommand{\Greedy}{\textsc{Greedy}}
\newcommand{\Oracle}{\textsc{Oracle}}
\newcommand{\Uniform}{\textsc{Uniform}}
\newcommand{\Interactive}{\textsc{Interactive}}
\newcommand{\Lowerbound}{\textsc{Lower-Bound}}

\newcommand{\dataobmatrix}[2]{P_{#1}^{#2}} 
\newcommand{\fspace}[2]{H_{#1}^{#2}}
\newcommand{\faction}[2]{{v}_{#1}^{#2}}

\newcommand{\loss}[2]{{\mathcal{L}}_{#1}^{#2}}

\def\NoNumber#1{{\def\alglinenumber##1{}\State #1}\addtocounter{ALG@line}{-1}}

\maketitle

\begin{abstract}
    Fleets of networked autonomous vehicles (AVs) collect terabytes of sensory data, which is often transmitted to central servers (the ``cloud'') for training machine learning (ML) models. Ideally, these fleets should upload all their data, especially from rare operating contexts, in order to train robust ML models. However, this is infeasible due to prohibitive network bandwidth and data labeling costs. Instead, we propose a cooperative
data sampling strategy where geo-distributed AVs collaborate to collect a diverse ML training dataset in the cloud.
Since the AVs have a shared objective but minimal information about each other's local data distribution and perception model, we can naturally cast cooperative data collection as an $N$-player mathematical game.
We show that our cooperative sampling strategy uses minimal information to converge to a centralized oracle policy with complete information about all AVs. Moreover, we 
theoretically characterize the performance benefits of our game-theoretic strategy compared to greedy sampling.
Finally, we experimentally demonstrate that our method outperforms standard benchmarks by up to $21.9\%$ on 4 perception datasets, including for autonomous driving in adverse weather conditions.  Crucially, our experimental results on real-world datasets closely align with our theoretical guarantees.


\end{abstract}


\section{Introduction}

Envision a fleet of autonomous vehicles (AVs) that observes heterogeneous street scenery, weather conditions, and rural/urban traffic patterns. 
To train robust ML models for perception or trajectory prediction, these AVs should share as much diverse fleet data as possible in the cloud, while balancing network bandwidth, data storage, and labeling costs.\footnote{A single AV can measure over 20-30 Gigabytes (GB) per second of video and LiDAR data \cite{intel} while a typical 5G wireless network only provides 10 Gbps of bandwidth for \textit{multiple} users \cite{Qualcomm5G}.} 
Given these constraints, we argue that AVs must \textit{coordinate} how to sample rare, out-of-distribution (OoD) data with common examples based on their unique local data distributions. For example, if only a few AVs operate in heavy snow, they should specialize in sending snowy images to the cloud, while others should send data from more common scenarios like sunny weather. Since the AVs have a shared target data distribution (objective) but limited information on each other's
local data distribution and potentially private ML models, our key contribution is to cast data collection as a \textbf{N-Player mathematical game}.

In our game-theoretic formulation (Fig. \ref{fig:expr_arch}), the AVs exchange minimal information to choose a data sampling strategy (what limited subset of data-points to upload). Importantly, we prove that an AV fleet will quickly converge to a \textbf{Nash equilibrium} (i.e., a fixed point where each robot does not change its sampling strategy) \cite{nash1950equilibrium, nash1951non} with bounded communication. Morever, our practical formulation accounts for perceptual uncertainty from \textit{imperfect} computer vision models and heterogenous
local data distributions.  As such, to the best of our knowledge, we are the first to cast data sampling from networked robots as a mathematical game. In summary, our key contributions are:

\begin{enumerate}[leftmargin=*,noitemsep,topsep=0pt]
    \item We provide a novel formulation for distributed data collection as a \textit{potential} game \cite{Rosen1965Existence} since the robots attempt to minimize a common convex objective function that incentivizes them to reach a balanced target data distribution in the cloud. 
    We prove that our strategy converges to a centralized oracle policy and, under mild assumptions, converges in a single iteration. 
    \item We provide theoretical performance bounds characterizing the benefits of our game-theoretic approach compared to greedy, individual behavior. 
    \item We show that our proposed strategy outperforms competing benchmarks by $21.9\%$ on 4 datasets, including the challenging Berkeley DeepDrive autonomous driving dataset \cite{yu2020bdd100k}.
\end{enumerate}
\begin{figure}[t]
    \includegraphics[width=1.0\columnwidth]{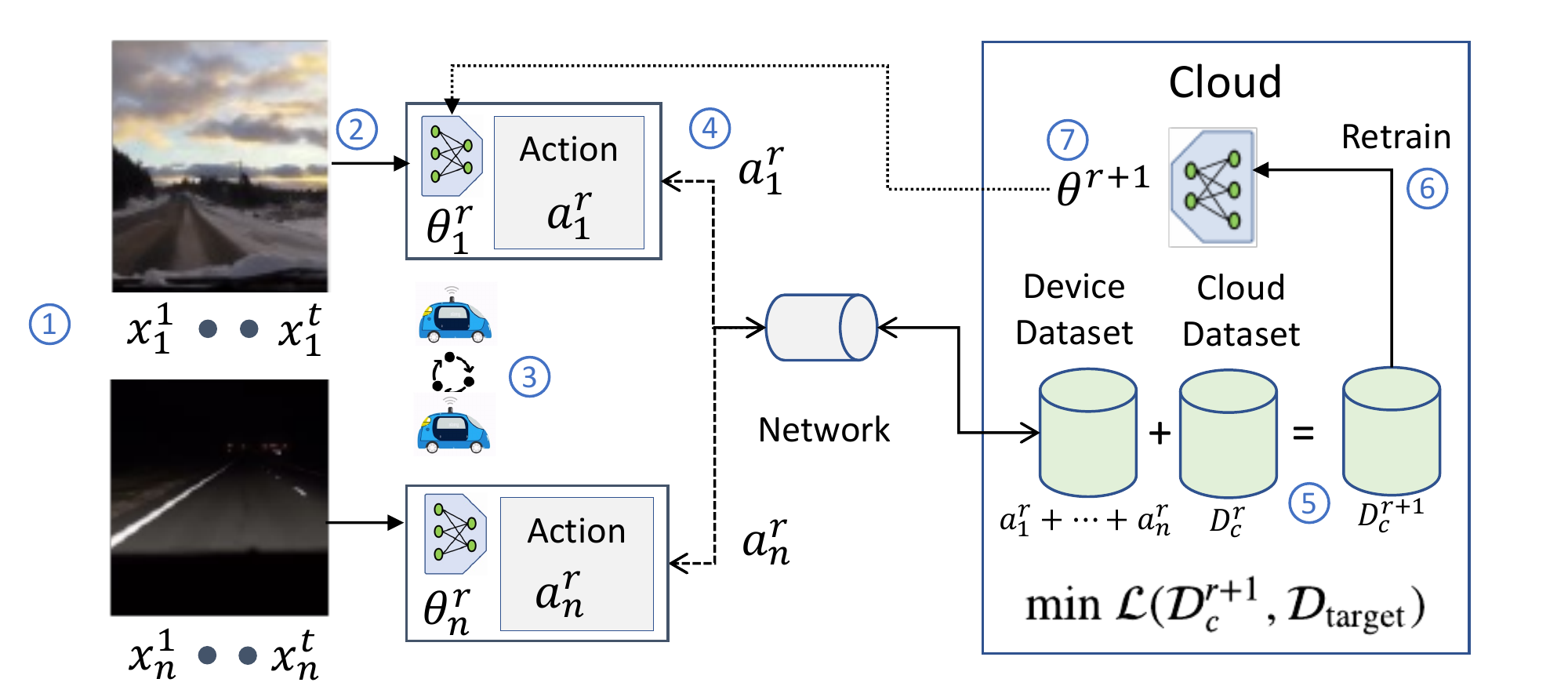}
\caption{\small{\textbf{Game-Theoretic Data Collection:} 
    Each step in our cooperative algorithm is numbered in blue. First, each AV $i$ observes a sequence of images $x^t_i$ in each round $r$ of data collection (step 1). Then, it classifies each image $x^t_i$ with a local vision model with parameters $\networkparam{r}_i$ (step 2). Then, it samples a limited set of $N_{\text{cache}}$ images according to its action policy $\action{r}{i}$, which governs what distribution of data-points to upload. Crucially, the action
    $\action{r}{i}$ is chosen cooperatively with other AVs using a distributed optimization problem (step 3). Next, each AV transmits its local cache of data-points to the cloud (step 4). The current cloud dataset, $\mathcal{D}_{c}^r$, is updated with the new uploaded data-points $\action{r}{i}$ (step 5). The combined cloud dataset, $\mathcal{D}_c^{r+1}$, can be used to periodically re-train new model parameters $\networkparam{r+1}$
    (step 6), which are then downloaded by the AVs (step 7). All AVs share a goal of
    minimizing the distance between the collected cloud dataset $\mathcal{D}_c^{r+1}$ (green) and the target $\mathcal{D}_{\text{target}}$. 
    \label{fig:expr_arch}
} }
    \vspace{-1em}
\end{figure}

\textbf{Related Work: }
Data collection from networked robots is related to cloud robotics \cite{goldbergwebsite, kehoe2015survey, kuffner2010cloud, Chinchali2010HarvestNetMV, goldberg2013cloud, chinchali2019RSS, tanwani2019fog, geng2021decentralized, pujol2021fog} and  active learning \cite{Cohn96activelearning, pmlr-v70-gal17a, settles2009active, tong2001active, ActLearn4wireless}. In such prior works, robots either send \textit{all} their data to the cloud
or select samples individually without coordination. 
In contrast, we exploit the fact that networked AVs can coordinate how to sample rare data to achieve a better outcome (i.e., balanced data distribution).

Federated learning (FL) \cite{fl, fedler, XIANJIA2021135, OptSampling4FL, armacki2022personalized, WirelessDSGD, PrivacyAsyncFL, FLSurveyPaper, caldas2018expanding} enables a fleet of mobile devices to train ML models on local private data and only share anonymized gradient updates with the cloud. However, our work is fundamentally different, and even complementary, to standard FL.  First, FL makes the restrictive assumption that each robot has perfectly labeled local data,
which is infeasible for AVs that observe rare, OoD images. Instead, we address a practical scenario where robots run local inference with only an \textit{imperfect} vision model that guides data collection.
Moreover, FL does not statistically sample data but trains on all of it locally, while our approach only uploads a limited set of images to reduce network and data labeling costs. Finally, we assume robots only receive ground-truth labels for the uploaded data in the cloud, which is required for training on rare classes. 

Our setting is a non-cooperative game since the robots do not explicitly form coalitions and act with minimal information about each other \cite{Rosen1965Existence, KIM20021219, wang1988theory, driessen2013cooperative, Stuart2001, tijs2003introduction}.
Specifically, our setting is a potential game since each robot attempts to maximize a shared concave objective function (the common \textit{potential} function) that rewards progress towards a balanced target data distribution in the cloud.
As detailed in Sec. \ref{sec:formulation} and Appendix \ref{app:whypotential}, changes in the common potential function directly translate to changes in each robot's policy towards a Nash Equilibrium. While concave games have been applied to problems such as wireless network resource
allocation \cite{han2012game}, ours is the first work to contribute a game-theoretic formulation for distributed data collection from \rebuttal{a fleet of robots}.

\section{Problem Formulation}
\label{sec:formulation}
\begin{figure}[t]
    \begin{minipage}[c]{0.35\textwidth}
        \includegraphics[width=\columnwidth]{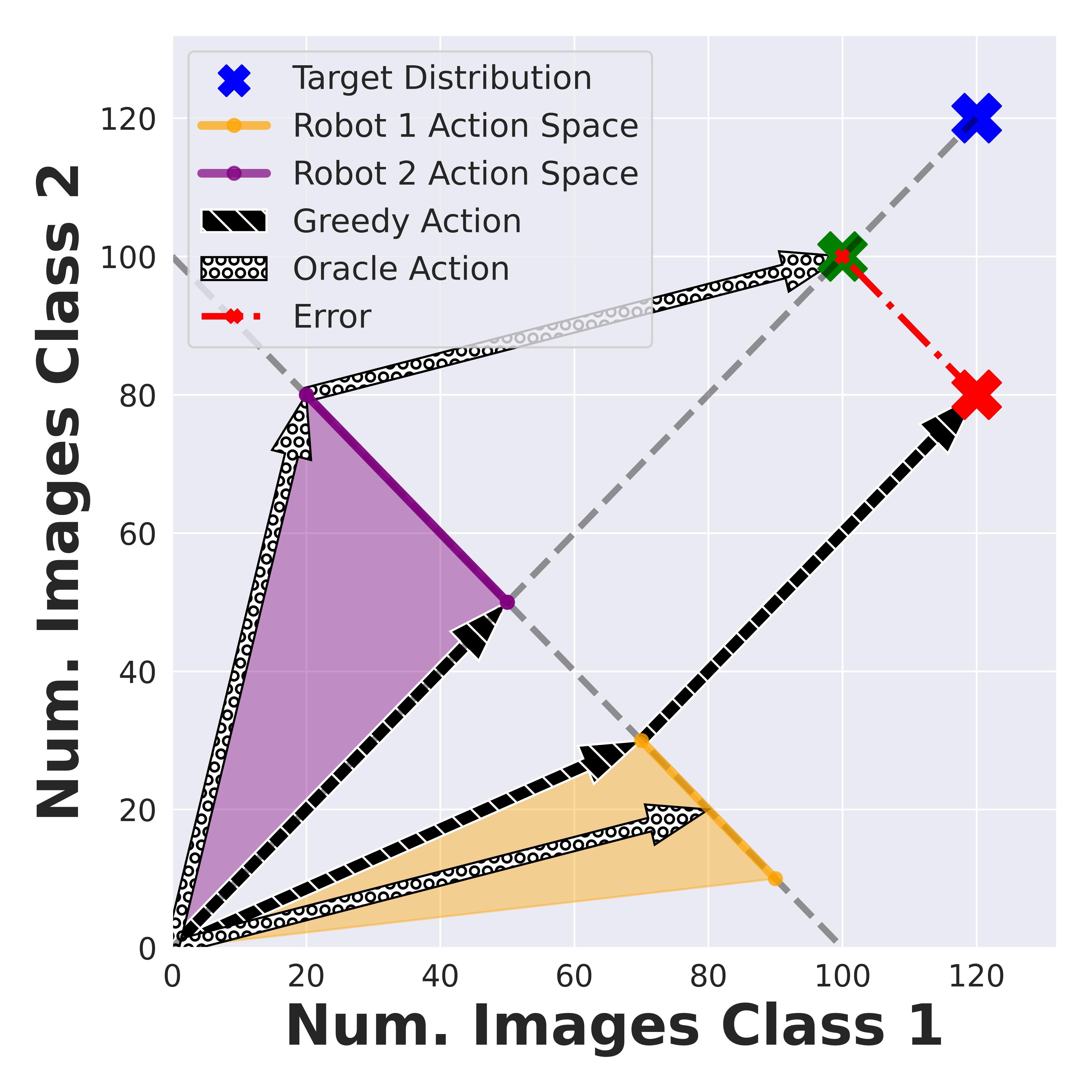}
      \end{minipage}\hfill
      \begin{minipage}[c]{0.65\textwidth}
        \caption{\small{\textbf{Why Cooperate?}
          Consider a toy example with only 2 classes and 2 robots. The axes represent the number of data-points for each class. Our goal is to reach the target distribution (blue cross) where each class has 120 data-points, represented by $(120, 120)$. 
          The robots start at $(0,0)$ with no data-points in the cloud. The possible combinations that can be uploaded from robots 1 and 2 are shown as the shaded feasible action spaces (yellow and purple). This shaded region is 
          determined by the robot's local data distribution and vision model accuracy (Def. \ref{def:fspace}). \Greedy~ (black) individually calculates the projection of the target distribution onto each robot's feasible action space, but the sum of actions may not be optimal, leading to a high error (red). However, \Oracle~ accounts for the two robots' action spaces and thus minimizes the error between the target dataset and the sum of actions (grey). 
        \label{fig:toy_ex}
    } }
    \end{minipage} 
\vspace{-1.5em}
\end{figure}

We now formulate a practical scenario, shown in Fig. \ref{fig:expr_arch}, where distributed robots collect data to train a robust ML model in the cloud. Our goal is to select an appropriate \textit{action} for each robot, specifically the data-points it should upload, so that the overall collected cloud dataset closely matches a given target, such as an equal distribution over all classes. Fig. \ref{fig:toy_ex} intuitively depicts data sampling. Our general formulation applies to any robot constrained by network, storage, or labeling costs,
ranging from Mars Rovers constrained by the Deep Space Network ($<5$ Mbps) \cite{DeepSpaceNetwork} or AV fleets.



\textbf{Robot Perception Model: }
For a simple exposition, we first consider a general computer vision classification task with $N_{\text{class}}$ classes. The dataset used for training the model is stored in the cloud. 
Each period of data collection, such as a day, is denoted by a \textit{round} $r$ and data is uploaded to the cloud at the end of a round $r$.
The cumulative dataset stored in the cloud at the end of round $r$ is denoted by $\clouddatasetr$, whose size is given by $N_{\clouddatasetr} = |\clouddatasetr|$. 
$N_{y_j}$ denotes the number of class $j$ data-points in the dataset $\clouddatasetr$. Therefore, the distribution of classes in the dataset $\clouddatasetr$ is denoted by $\rho_{\clouddatasetr} = \left[ N_{y_0}, N_{y_1}, \cdots, N_{y_{N_{\text{class}}}}  \right]$. 
Each robot $i$ has a perception model, such as a deep neural network (DNN), where local inference is denoted by $\hat{y} = f(x; \networkparam{r}_i)$. Here, $f(\cdot; \networkparam{r}_i)$ is a model with parameters $\networkparam{r}_i$ at round $r$, $\hat{y}$ is the predicted label for input $x$, and $y$ is the corresponding ground-truth label. 

Importantly, the models can be \textit{imperfect} -- each model has a confusion matrix, $C^r_i \in \reals^{\nclass \times \nclass}
$ (Eq.
\ref{eq:confusion_mat}) that captures the probability of predicting class $\hat{y}_j$ for an image with true class $y_j$, denoted by $p(\hat{y}_j | y_j)$. 
In practice, one of the $\nclass$ classes can represent an ``unknown'' category while the rest of the $\nclass-1$ classes can represent a mixture of rare and well-understood concepts. 
Further details on the confusion matrix are provided in Appendix \ref{sec:confusion_matrix}.
Finally, while we use a (likely imperfect) classification model to sample images, the uploaded data can be used to train models for diverse tasks such as object detection, semantic segmentation etc.


\textbf{Robot Fleet: }
We consider a fleet of $\ndevice$ robots, where each robot $i$ collects a data-point $\xit$ at time $t$ from its local environment (i.e., camera image or LiDAR scan). 
The distribution of true classes observed by a robot $i$ in round $r$ is denoted by $p^r_i(y) \in \realsplus^{\nclass}$, which sums to one over the $\nclass$ classes. 
From this distribution, a robot $i$ observes a large dataset of images
on round $r$ denoted by $\devicedataset{r}{i}$ of size $|\devicedataset{r}{i}| = N_{i}^r$.
However, to limit network bandwidth and data labeling costs, each robot $i$ can only upload
$\ncache \ll N^r_i$ images to the cloud at the end of round $r$, which it stores in an on-board cache within the round. The size of $N_{\text{cache}}$ can be flexibly set by a roboticist based on data upload and labeling budgets.
The class predictions, $\hat{y}_j$, are generated by running local inference of the classification model, $\hat{y}_i = f(x_i^t; \networkparam{r}_i)$, for the collected data-points $\xit$. Finally, $p^r_i(\hat{y})$ denotes the distribution of \textit{predicted} classes observed by robot $i$.

\begin{assumption} \label{assum:large_data_collected}
The number of data-points collected by a robot on any round $r$, $N_{i}^r$, is significantly greater than the size of the local robot cache, $N_{i}^r \gg N_{\text{cache}}$.
\end{assumption} 

This is a valid assumption since each robot will collect much more data compared to the amount it can economically upload. Our formulation is extremely general -- each robot can have different (or the same) model parameters $\networkparam{r}_i$ and observe a different distribution $p^r_i(y)$ of the $N_{\text{class}}$ classes. 

\paragraph*{Robot Statistical Sampling Action: }

At each round $r$, each robot $i$ takes an action which determines how many data-points of each class to send to the cloud. We define each robot $i$'s action at round $r$ as $\action{r}{i} = \left[ N_{y_0}, N_{y_1}, \cdots, N_{y_{N_{\text{class}}}}  \right]$, i.e. the number of data-points of each class $j$. Our key technical innovation will be to illustrate \textit{how} to cooperatively select an optimal action. Importantly, since each robot $i$ has an
\textit{imperfect} perception model with confusion matrix $C^r_i$, there is uncertainty in the effect of taking any action $\action{r}{i}$. As such, our natural next step is to define the set of feasible actions any robot can upload given its local data distribution and perceptual uncertainty.

\begin{definition}[Feasible data matrices]
A feasible data matrix, $\dataobmatrix{i}{r} \in \mathbb{R}^{\nclass\times\nclass}$, of robot $i$ in round $r$ is the probability matrix defined as: 
$$\dataobmatrix{i}{r}=[p^{r}_{i,1},...,p^{r}_{i,\nclass}],$$
where $p^r_{i,j}=\frac{C_{i,j}^r*p_i^r(y)}{\|C_{i,j}^r*p_i^r(y)\|_1}=p(y|\hat{y}_j)\in\mathbb{R}^{\nclass}, \forall j=1,...,\nclass$. We use $*$ as element-wise multiplication of vectors, $\|\cdot\|_1$ as the $L_1$ norm, and the second subscript $j$ to denote the $j$-th column of a matrix. We assume $\dataobmatrix{i}{r}$ has linearly independent columns, so there exists a left inverse.
    In other words, we assume the mapping from action to feasible action (Defs. \ref{def:fspace}, \ref{def:faction}) is one-to-one. This assumption is justified in the Appendix due to space limits.
\end{definition}

\begin{definition}[Feasible spaces of robots]

A feasible space, $\fspace{i}{r}$, of robot $i$ in round $r$ is the set of feasible data-points the robot can send to the cloud: 
$$\fspace{i}{r}=\{\faction{i}{r}=\dataobmatrix{i}{r} \action{r}{i} ~|~ \mathbf{1}^\top\action{r}{i}\leq \ncache, \action{r}{i} \in \mathbb{R}_{+}^{\nclass}\}.$$ 
$\fspace{i}{r}$ is the convex hull of all columns of $\dataobmatrix{i}{r}$ and $\mathbf{0}$. 
    To simplify notation, $\faction{i}{r} = \dataobmatrix{i}{r} \action{r}{i}$ represents a feasible action $\faction{i}{r}$, which is obtained by multiplying  an intended action $\action{r}{i}$ by the feasible data matrix $\dataobmatrix{i}{r}$. 
\label{def:fspace}
\end{definition}

Intuitively, the feasible space (see Fig. \ref{fig:toy_ex}) represents the expected number of datapoints uploaded per class but not the exact number due to perceptual uncertainty. 
Each robot uploads $\ncache$ data-points sampled from action $\action{r}{i}$, which is pooled in the cloud. 
We assume we only get ground-truth labels $y$ in the cloud, 
since the limited cache of images can be scalably annotated by a human. 
Then, we re-train a new perception model on the new dataset $\mathcal{D}_c^{r+1}$.
Each robot then downloads the new model parameters $\networkparam{r+1}_i$, along with the new confusion matrix and latest cloud dataset distribution, $\clouddataset{r+1}$.
Our formulation is general -- models and confusion matrices do not have to be updated every round $r$ and we can, for example, simply re-train a model after $M$ rounds of data collection.

\paragraph*{Collective Goal: Achieving a Target Data Distribution} 

Often, we want to achieve a balanced dataset in the cloud with ample representation of rare events in order to train a robust ML model. As such, the shared goal of all the robots is to achieve \textit{any} user-specified target dataset $\datasettarget$,  which defines the number of data-points of each class the robots want to collect in the cloud. The fleet's goal is to choose actions $\action{r}{i}$; $\forall \hspace{0.1cm }i=1, \ldots ,\ndevice$ at round $r$ to
\textit{collectively} reduce a strictly convex distance
metric, denoted by $\kl{r}$, penalizing the difference between the current cloud dataset $\clouddataset{r}$ and the target dataset $\datasettarget$. Our general framework can handle any strictly convex distance metric, such as the $L_2$ norm or the Kullback-Leibler (KL) Divergence \cite{kullback1951information}. Since all robots have a common goal to maximize the negative loss $-\kl{r}$, which is a concave potential function, our setting is a potential game with concave rewards
(see Sec. \ref{app:whypotential}). 




\paragraph*{Centralized Oracle Action Policy: }
We now provide a formal optimization problem for distributed data collection. To provide key insight, we first describe a centralized ``oracle'' solution that has perfect information about all robots $i$, namely their confusion matrix $C^r_i$ and statistics of their data distribution $p^r_i(y)$. Then, we formalize a greedy, individualized approach and our interactive game-theoretic approach that matches the oracle policy's performance. 

An oracle action policy, denoted by \Oracle, has access to all robots' data distributions and confusion matrices $C^r_i$. The oracle calculates each robot's action $\action{r}{i}$ by solving the convex optimization problem in Eq. \ref{eq:oracleopt}. The constraint (Eq. \ref{eq:distrib_action_oracle}) ensures that the actions $\action{r}{i}$ do not exceed
the cache limit $\ncache$. Eq. \ref{eq:dataset_oracle} shows the update of the cloud dataset for round $r+1$ based on the actions $\action{r}{i}$ taken in the feasible space, $\dataobmatrix{i}{r} \action{r}{i}$, by each robot for round $r$, which we now detail.

A key subtlety is to update the cloud dataset $\clouddataset{r+1}$ 
by merging the current cloud dataset $\clouddataset{r}$ and each robots' uploaded dataset $\action{r}{i}$. However, each robot's action is imperfect -- it might think it is uploading class $j$ but due to perceptual uncertainty it might actually upload another class $j'$. 
Specifically, the robot's transmitted dataset $\action{r}{i}$ is calculated from the predicted class labels $\hat{y}_j$ and not the true class labels $y_j$, which are not available on-robot. However, we can use predicted class probabilities $p_i^r(\hat{y}_j)$ to estimate true class probabilities $p_i^r(y_j)$ by: $p_i^r(y_j) = \sum_{k=1}^{N_{\text{class}}} p_i^r(\hat{y}_k) \cdot \condyh{j}{k} $. Note that each robot only receives a confusion matrix $C^r_i$ from the cloud which consists of
conditional probabilities $\condhy{j}{j}$ and not $\condyh{j}{j}$. Therefore, we still need to figure out a way to calculate $\condyh{j}{j}$. Due to space limits, 
we present the Bayesian update of $\condyh{j}{j}$ in the Appendix \ref{sec:calculating_conditional_probabilities}. 

 \begin{minipage}[t]{0.50\textwidth}
  \begin{mdframed}[roundcorner=5pt]
  \parbox[t][3.5cm][t]{\linewidth}{
        \begin{subequations} \label{eq:oracleopt}
        \centering
        \textsc{Problem 1: Oracle Optimization}
        \begin{small}
        \begin{align}
        \min_{\action{r}{1} \ldots \action{r}{\ndevice} } & \kl{r+1} \label{eq:kl_oracle} \\
        \text{subject to: } & \action{r}{i}  \geq 0 ; \hspace{1mm} \forall \hspace{1mm} i = 1, \ldots, \ndevice \\ 
        &            1^T \cdot \action{r}{i} \leq  N_{\text{cache}} ; \hspace{1mm} \forall \hspace{1mm} i = 1, \ldots, \ndevice \label{eq:distrib_action_oracle} \\ 
        &            \clouddataset{r+1} = \clouddataset{r} + \sum_{i=1}^{\ndevice} \left( \dataobmatrix{i}{r} \action{r}{i}  \right)  \label{eq:dataset_oracle} \\ \notag
        \end{align} 
        \end{small}
        \end{subequations}
  }
  \end{mdframed}%
 \end{minipage}%
\hfill
 \begin{minipage}[t]{0.50\textwidth}
  \begin{mdframed}[roundcorner=5pt]
  \parbox[t][3.5cm][t]{\linewidth}{
    \begin{subequations} \label{eq:indvopt}
    \centering
    \textsc{Problem 2: Greedy Optimization}
    \begin{small}
    \begin{align}
    \min_{\action{r}{i}} \hspace{0.2cm} & \kl{r+1} \\
    \text{subject to: } & \action{r}{i} \geq 0 \\
    & 1^T \cdot \action{r}{i} \leq N_{\text{cache}} \\
    &   \clouddataset{r+1} = \clouddataset{r} + \left( \dataobmatrix{i}{r} \action{r}{i}  \right) \label{eq:dataset_indv} \\ \notag
    \end{align}
    \end{small}
\end{subequations}
  }
  \end{mdframed}%
 \end{minipage}%

\vspace{-0.9em}


\paragraph*{Greedy Action Policy: }
A greedy action policy, referred to as \Greedy, will not have any information about other robots' local data distribution, confusion matrix, or observed datasets. Thus, the best the robot can do is to attempt to minimize the loss function $\kl{r+1}$ by only optimizing its own action $\action{r}{i}$ \textit{individually}, as shown in Eq. \ref{eq:indvopt}. 
The optimization program \ref{eq:indvopt} is very similar to that of the \Oracle \space policy (Eq. \ref{eq:oracleopt}), with the only difference being that the decision variables are reduced to one.
Since the \Oracle \space (Eq. \ref{eq:oracleopt}) and \Greedy \space (Eq. \ref{eq:indvopt}) policy optimization programs have a convex objective with linear constraints, they are guaranteed to converge to an optimal solution. 



\section{A Cooperative Algorithm for Data Collection}
\label{sec:data_collection}

We propose an \Interactive \space algorithm for generating actions for each robot, which only requires interaction between the robots and no cloud coordination. Rather than the cloud calculating actions for each robot in one-shot, as shown in the \Oracle \space optimization program (\ref{eq:oracleopt}), each robot calculates its actions individually using shared information from other robots. 
Importantly, each robot only shares its feasible action without divulging its confusion matrix or local data distribution to others. 

Alg. \ref{alg:train} describes our \Interactive \space policy, which runs for each round $r$. The inputs (line \ref{ln:inputs}), which are visible to each robot, are the target dataset $\datasettarget$ and the current cloud dataset $\clouddataset{r}$. 
We initialize each robot's action $\action{r}{i}$ in lines \ref{ln:for1_loop} - \ref{ln:for1_loop_end} using the \Greedy \space policy (Eq. \ref{eq:indvopt}) because the robots have not yet communicated any information about each others' tentative actions. 
In lines \ref{ln:share_init_actions} - \ref{ln:for2_loop_end}, we calculate optimal actions for each robot using the \Interactive \space message passing algorithm. 


We start by sharing each robot's product of feasible data matrix and initial action (line \ref{ln:init_actions}) with all other robots (line \ref{ln:share_init_actions}). Then, we iterate over each robot (lines \ref{ln:for2_loop} - \ref{ln:for2_loop_end}) and calculate its best action $\action{r}{i}$ using the optimization program
Eq. \ref{eq:interactiveopt} while considering the other robots' actions fixed (line \ref{ln:interactive_opt}). The optimization program in Eq. \ref{eq:interactiveopt} is similar to that of the \Oracle \space policy (Eq. \ref{eq:oracleopt}); the difference lies in the calculation of the cloud dataset at round $r+1$ in Eq. \ref{eq:dataset_inter} and having one decision variable.


In line \ref{ln:share_calc_actions}, each robot shares its product of the feasible data matrix and the optimal action calculated using Eq. \ref{eq:interactiveopt} with the others. This repeats until our system reaches a Nash equilibrium (i.e. a fixed point, where no robot would change its action).  Finally, after convergence, we upload data from each robot sampled according to its final calculated action $\action{r}{i}$ (line \ref{ln:share_cache}). Since the \Interactive \space
optimization program \ref{eq:interactiveopt} is convex, it converges to an optimal solution (see Thm. \ref{theorem:converge_eventually}).

\begin{minipage}{0.49\textwidth}
    \begin{small}
    \begin{algorithm}[H]
    \DontPrintSemicolon
    \SetNoFillComment 
    \SetAlgoLined
        \textbf{Input: } \text{Target, Cloud Dataset} $\datasettarget$, $\clouddataset{r}$ \; \label{ln:inputs} 
        \For{$i = 1, \ldots, \ndevice$ \label{ln:for1_loop} }
        {
            Initialize $\action{r}{i}$ using \Greedy \space actions Eq. \ref{eq:indvopt}. \; \label{ln:init_actions}
        } \label{ln:for1_loop_end}
        Share $\dataobmatrix{i}{r} \action{r}{i}$ with all robots. \;  \label{ln:share_init_actions}
        \While{Not Converged \label{ln:while_loop}}
         {   
            \For{$i = 1, \ldots, \ndevice$ \label{ln:for2_loop} }
            {
                Get action $\action{r}{i}$ using opt. program Eq. \ref{eq:interactiveopt} \; \label{ln:interactive_opt}
                Share actions $\dataobmatrix{i}{r}\action{r}{i}$ with all robots. \;  \label{ln:share_calc_actions}
            } \label{ln:for2_loop_end}
        } \label{ln:while_loop_end}
        \For{$i = 1, \ldots, \ndevice$ \label{ln:for3_loop} }
        {
            Upload caches determined by actions $\action{r}{i}$ \; \label{ln:share_cache}
        } \label{ln:for3_loop_end}
        \caption{\textbf{\Interactive \space Algorithm}}
     \label{alg:train}
    \end{algorithm}
    \vspace{-1em}
    \end{small}
\end{minipage}
\begin{minipage}{0.49\textwidth}
    \begin{subequations} \label{eq:interactiveopt}
    \begin{mdframed}
\centering
\textsc{Problem 3: Interactive Optimization}
    \begin{small}
    \begin{align}
        \min_{\action{r}{i}} \hspace{0.2cm} & \kl{r+1} \\
        \text {subject to: } & \action{r}{i} \geq 0 \\
        & 1^T \cdot \action{r}{i} \leq N_{\text{cache}} \\
        & \clouddataset{r+1} =  \clouddataset{r} +  \sum_{k=1; k \neq i}^{\ndevice} \left( \dataobmatrix{k}{r} \action{r}{k}  \right) + \left( \dataobmatrix{i}{r} \action{r}{i} \right)  \label{eq:dataset_inter} 
    \end{align}
    \end{small}
    \end{mdframed}
    \end{subequations}
\end{minipage}

\paragraph*{Theoretical Analysis: }
We first show that the \textit{while} loop (lines \ref{ln:while_loop} - \ref{ln:while_loop_end}) in our proposed Alg. \ref{alg:train} will eventually converge. Moreover, we provide easily-obtained conditions for when it converges in \textit{one iteration}, which minimizes inter-robot communication. Crucially, we also show that our interactive policy matches the omniscient oracle policy.  
\textbf{All proofs} are in the Appendix \ref{sec:converge_eventually} - \ref{sec:total_num_messages}.

\begin{theorem}[Convergence] \label{theorem:converge_eventually}
    The \textit{while} loop (lines \ref{ln:while_loop} - \ref{ln:while_loop_end}) in Alg. \ref{alg:train} will eventually converge.
\end{theorem}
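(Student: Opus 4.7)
The plan is to interpret the \textit{while} loop in Alg.~\ref{alg:train} as sequential best-response dynamics in a concave potential game with potential function $-\mathcal{L}(\rho_{\mathcal{D}_c^{r+1}}, \rho_{\mathcal{D}_{\text{target}}})$, and then invoke a monotone convergence argument. First I would fix the round $r$ and view the combined cloud distribution $\rho_{\mathcal{D}_c^{r+1}}$ as an affine function of the tuple $(a_1^r,\ldots,a_{\ndevice}^r)$ (cf.\ Eq.~\ref{eq:dataset_inter}), so that $\mathcal{L}$ is a strictly convex function of the joint action profile. Each robot's feasible set $\{a_i^r\ge 0,\ \mathbf{1}^\top a_i^r\le \ncache\}$ is compact and convex, hence the joint action space is compact and convex as well.

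Next, I would establish monotonicity of the sequence of loss values generated across the iterations of the \textit{while} loop. In the $k$-th pass, when robot $i$ updates by solving problem~\ref{eq:interactiveopt} holding all other $a_k^r$ ($k\ne i$) fixed, the previous value of $a_i^r$ remains feasible. Consequently, the new loss value is no larger than the previous one, so the sequence $\{\mathcal{L}^{(k)}\}$ of loss values indexed by single-robot updates is non-increasing. Since $\mathcal{L}$ is a (non-negative) strictly convex distance metric, it is bounded below by $0$. By the monotone convergence theorem for real sequences, $\mathcal{L}^{(k)}$ converges to some limit $\mathcal{L}^\star\ge 0$.

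To lift this to convergence of the action profiles themselves, I would use compactness to extract a convergent subsequence of joint actions $(a_1^{(k_n)},\ldots,a_{\ndevice}^{(k_n)})\to(a_1^\star,\ldots,a_{\ndevice}^\star)$. Continuity of $\mathcal{L}$ gives $\mathcal{L}(a_1^\star,\ldots,a_{\ndevice}^\star)=\mathcal{L}^\star$. The key step is then to argue that $(a_1^\star,\ldots,a_{\ndevice}^\star)$ is a fixed point of the best-response map: if some robot $i$ could still strictly improve, then by continuity of the argmin (which is single-valued by strict convexity of $\mathcal{L}$ in $a_i^r$ with others fixed), a sufficiently late iterate would also decrease $\mathcal{L}$ by a non-negligible amount, contradicting convergence of $\mathcal{L}^{(k)}$ to $\mathcal{L}^\star$. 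Hence every limit point is a fixed point, which yields convergence of the loop.

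The main obstacle I anticipate is the last step: going from convergence of the scalar loss values to convergence (or at least stationarity) of the joint action iterates. Strict convexity of $\mathcal{L}$ in each robot's coordinate (which follows from the assumption that each $\dataobmatrix{i}{r}$ has linearly independent columns, so the map $a_i^r\mapsto \dataobmatrix{i}{r}a_i^r$ is injective) is what makes the per-robot best response unique and continuous in the other robots' actions, and this is precisely the ingredient that rules out limit cycles and forces the sequence to stall at a Nash equilibrium.
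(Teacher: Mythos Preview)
Your approach is essentially the same as the paper's: both treat the \textit{while} loop as sequential best-response dynamics in a potential game with potential $-\mathcal{L}$ and appeal to monotonicity of the potential along updates together with strict convexity to obtain a unique limit. The only difference is packaging: the paper first isolates uniqueness of the optimal sum $\sum_i v_i^{int,r}$ as a separate lemma (via the same strict-convexity argument you use) and then \emph{cites} an external best-response convergence theorem for potential games, whereas you unfold that citation into the monotone-loss plus compactness plus continuity-of-best-response argument. What your route buys is a self-contained proof; what the paper's route buys is brevity.

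One point worth tightening in your sketch is the final inference ``every limit point is a fixed point, which yields convergence of the loop.'' As you note, strict convexity of $\mathcal{L}$ in each $a_i^r$ (via injectivity of $P_i^r$) makes each robot's best response unique, but $\mathcal{L}$ need not be strictly convex in the \emph{joint} profile $(a_1^r,\ldots,a_{\ndevice}^r)$, since it depends only on $\sum_i P_i^r a_i^r$. The clean way to close this is exactly what the paper's lemma provides: the optimal \emph{sum} $\sum_i v_i^r$ is unique by strict convexity, and once the monotone loss sequence has converged, each robot's best response to the others is already attained (because the sum is pinned down and $v_i^r$ is then uniquely determined given the others), so no robot moves and the iterates themselves stabilize. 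You have all the ingredients; just make the uniqueness-of-the-sum step explicit rather than leaving it implicit in ``rules out limit cycles.''
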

Next, we show one of the main technical contributions of this paper, which states that our proposed \Interactive \space algorithm will reach the same optimal solution as the \Oracle \space upon termination. 


\begin{theorem}[\Interactive \space converges to \Oracle] \label{theorem:orable_inter_same}
    The while loop in Alg. \ref{alg:train} lines \ref{ln:while_loop} - \ref{ln:while_loop_end} is guaranteed to return an action (denoted by $\action r {int,i}$) that is equal to the \Oracle \space action denoted by $\action r {o,i}$.
    \end{theorem}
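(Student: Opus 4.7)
The plan is to exploit the fact that Problem~\ref{eq:oracleopt} and the Interactive subproblem \ref{eq:interactiveopt} share the same objective $\kl{r+1}$, viewed as a function of the joint tuple $(\action{r}{1},\ldots,\action{r}{\ndevice})$, and that the joint feasible set is a Cartesian product $\mathcal{A}=\prod_i\mathcal{A}_i$ with $\mathcal{A}_i=\{\action{r}{i}\geq 0,\ \mathbf{1}^\top\action{r}{i}\leq\ncache\}$. Because $\clouddataset{r+1}=\clouddataset{r}+\sum_i\dataobmatrix{i}{r}\action{r}{i}$ is affine in the joint action and $\mathcal{L}$ is strictly convex in its first argument, the Oracle objective is convex on $\mathcal{A}$ with linear constraints, so KKT conditions are both necessary and sufficient for global optimality.

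First I would characterize the fixed point of the while loop. By the termination criterion, for each robot $i$ the action $\action{r}{int,i}$ solves Problem~\ref{eq:interactiveopt} with every $\{\action{r}{int,k}\}_{k\neq i}$ held fixed. Since Problem~\ref{eq:interactiveopt} minimizes the same $\kl{r+1}$ over $\action{r}{i}\in\mathcal{A}_i$, there exist multipliers $(\lambda_i,\mu_i)$ for which the $i$-th block KKT conditions of the Oracle problem hold at the fixed point. Next I would assemble these per-robot KKT conditions into the KKT system of Problem~\ref{eq:oracleopt}: because the constraints are separable across robots, the joint Lagrangian decomposes block-wise, so stacking the per-robot multipliers produces a valid joint multiplier vector, and stationarity, primal and dual feasibility, and complementary slackness all hold simultaneously. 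By the sufficiency of KKT for this convex program, $(\action{r}{int,1},\ldots,\action{r}{int,\ndevice})$ is a global minimizer of the Oracle problem, so it achieves the same objective as $(\action{r}{o,1},\ldots,\action{r}{o,\ndevice})$. This is essentially the potential-game argument: a Nash equilibrium of best-response dynamics on a separable potential game, with a convex potential and convex product feasible set, coincides with a joint minimizer of the potential.

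To upgrade from ``same objective'' to ``equal action,'' I would invoke uniqueness. Strict convexity of $\mathcal{L}(\cdot,\datasettarget)$ forces the Oracle-optimal cloud distribution $\clouddataset{r+1}$, and hence the aggregate feasible action $\sum_i\dataobmatrix{i}{r}\action{r}{i}$, to be unique. Fixing, for each $i$, the others' contributions at their oracle values, the Oracle and the Interactive best response then both reduce to the same single-robot strictly convex program in $\action{r}{i}$ (the composition of a strictly convex function with the injective map $\action{r}{i}\mapsto\dataobmatrix{i}{r}\action{r}{i}$ provided by the left-invertibility of $\dataobmatrix{i}{r}$ is itself strictly convex), whose minimizer is unique. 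This pins down $\action{r}{int,i}=\action{r}{o,i}$. I expect this last uniqueness step to be the main obstacle, since strict convexity on the cloud-side distribution does not a~priori transfer to the action side: multiple joint actions could in principle give the same aggregate feasible action across robots. The left-inverse assumption in the definition of feasible data matrices is exactly what is needed to break this degeneracy once the joint optimum is fixed, and I would invoke it per robot after the joint optimality from the KKT argument has already been established.
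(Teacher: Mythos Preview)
Your approach is essentially the paper's: at the while-loop fixed point each robot satisfies the first-order optimality condition for its own block, the joint feasible set is a product $\prod_i \mathcal{A}_i$, and assembling the per-block conditions yields the joint first-order condition for the Oracle problem. The paper writes this with variational inequalities (showing $(\faction{i}{} - \faction{i}{int,r*})^\top \nabla \mathcal{L} \geq 0$ for each robot, then summing over $i$ and using the chain rule), while you phrase it via stacked KKT multipliers; the two formulations are equivalent here, so the core argument matches.

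One remark on your final uniqueness step: as written it is circular. You propose to fix the other robots at their \emph{oracle} values and argue that robot $i$'s best response is then uniquely determined; but at the Interactive fixed point robot $i$ is best-responding to the others' \emph{Interactive} values, so you cannot conclude $\action{r}{int,i}=\action{r}{o,i}$ without already knowing the others match. The paper does not fully resolve this either: its uniqueness lemma only establishes that the \emph{aggregate} $\sum_i \faction{i}{o,r}$ is unique, and its proof therefore really concludes $\sum_i \faction{i}{int,r} = \sum_i \faction{i}{o,r}$. The final per-robot equality via the left inverse of $\dataobmatrix{i}{r}$ is asserted without further argument; indeed the Oracle problem can have multiple minimizers sharing the same aggregate (e.g.\ two robots with identical $\dataobmatrix{i}{r}$ can exchange mass), so the theorem is most safely read as ``Interactive reaches an Oracle-optimal point,'' which your KKT argument already delivers.
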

Next, we provide practical conditions for when our proposed \Interactive \space action policy will converge in \textit{one iteration} of message passing, which bounds inter-robot communication. 



\begin{theorem}[Bounded Communication] \label{theorem:converge_one}
When the total number of uploaded data-points is smaller than the difference between the size of target dataset $\mathcal{D}_{\text{target}}$ and the current cloud dataset $\clouddatasetr$, namely
$\mathbf{1}^\top(\mathcal{D}_{\text{target}}-\clouddatasetr) > \ndevice \times \ncache$, the while loop in Alg. \ref{alg:train} lines \ref{ln:while_loop} - \ref{ln:while_loop_end} terminates in one iteration.
\end{theorem}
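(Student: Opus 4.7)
The plan is to show that, under the hypothesis $\mathbf{1}^\top(\datasettarget - \clouddatasetr) > \ndevice \times \ncache$, a single sweep through the \textit{for} loop (lines 7--10 of Alg.\ \ref{alg:train}) already produces a Nash equilibrium of the interactive game. By Thm.\ \ref{theorem:orable_inter_same} this equilibrium coincides with the \Oracle\ optimum, so no further best-response update can reduce the loss and the convergence check in the \textit{while} loop terminates it after a single iteration.

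First I would establish a \emph{no-overshoot} lemma. Each column of $\dataobmatrix{i}{r}$ is a conditional probability $p(y\mid\hat y_j)$ whose entries sum to one, so $\mathbf{1}^\top\dataobmatrix{i}{r}\action{r}{i}=\mathbf{1}^\top\action{r}{i}\le\ncache$; summing over robots yields
\[
\mathbf{1}^\top\clouddataset{r+1}\ \le\ \mathbf{1}^\top\clouddatasetr+\ndevice\ncache\ <\ \mathbf{1}^\top\datasettarget .
\]
Combined with strict convexity of $\kl{r+1}$, this forces the cache constraint $\mathbf{1}^\top\action{r}{i}=\ncache$ to bind at every greedy optimum and every interactive best response, and it rules out any incentive to leave slack in the upload budget.

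Second I would analyze the sweep one robot at a time. When robot $1$ performs its interactive update, it projects the residual $\datasettarget - \clouddatasetr - \sum_{k\ge 2}\dataobmatrix{k}{r}\action{r}{k,g}$ onto $\fspace{1}{r}$ under the strictly convex metric. Using the KKT conditions of the \Oracle\ problem\ (\ref{eq:oracleopt}) together with the no-overshoot bound, I would show that this projection realises the $\fspace{1}{r}$-coordinate of a particular \Oracle\ decomposition $(\action{r}{1,*},\action{r}{2,g},\ldots,\action{r}{\ndevice,g})$ in which robots $k\ge 2$ retain their greedy actions; the running cumulative sum after robot $1$'s update then equals the unique \Oracle\ aggregate $T^{*}$. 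For each $i\ge 2$ I would check by induction that, given the shifted baseline, the greedy multipliers $\mu_i\ge 0$ and $\lambda_{i,\cdot}\ge 0$ still certify $\action{r}{i,g}$ as the unique best response in Problem\ (\ref{eq:interactiveopt}); the crucial fact is that no-overshoot keeps the shift $-\sum_{k\ne i}\dataobmatrix{k}{r}\action{r}{k,\cdot}$ inside the polar of the tangent cone to $\fspace{i}{r}$ at $\dataobmatrix{i}{r}\action{r}{i,g}$. Combining, the sweep terminates in the profile $(\action{r}{1,*},\action{r}{2,g},\ldots,\action{r}{\ndevice,g})$, a Nash equilibrium whose aggregate is $T^{*}$; any further sweep leaves it unchanged and the \textit{while} loop exits.

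The main obstacle is the dual-feasibility step just invoked: verifying that the greedy multipliers survive the baseline shift for every $i\ge 2$. This is precisely where the hypothesis does the work — no-overshoot bounds each coordinate of the shift by the spare room in $\datasettarget - \clouddatasetr$ and therefore preserves the sign pattern of $\nabla_{\action{r}{i}}\kl{r+1}$ at the greedy point. For the $L_2$ distance this reduces to a finite list of inner-product inequalities involving the columns of $\dataobmatrix{i}{r}$; for the KL divergence and the other strictly convex losses used in the experiments, the same conclusion follows from monotonicity of the gradient within the non-negative orthant singled out by no-overshoot.
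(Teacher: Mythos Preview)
Your no-overshoot lemma is correct and is essentially the paper's own Lemma~\ref{lemma:hyperplane}: under the hypothesis, every greedy and every interactive best response uses the full cache, so all intermediate aggregates lie on the hyperplane $\mathbf{1}^\top v=\ndevice\ncache$. That part is fine.

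The gap is in what follows. You assert that robot~$1$'s best response already drives the running aggregate to the \Oracle\ point $T^{*}$, i.e.\ that there exists an \Oracle\ decomposition of the form $(\action{r}{1,*},\action{r}{2,g},\ldots,\action{r}{\ndevice,g})$. This is false in general. Take $\nclass=2$, $\ndevice=3$, $\ncache=10$, $\datasettarget-\clouddataset{r}=(100,100)$, with robots~$1$ and~$2$ able to upload only class~$1$ (feasible space $\{(t,0):0\le t\le 10\}$) and robot~$3$ having the full simplex. The greedy actions are $(10,0),(10,0),(5,5)$ and the unique \Oracle\ aggregate is $(20,10)$. For robots~$2$ and~$3$ to retain their greedy actions while hitting $T^{*}$, robot~$1$ would have to contribute $(5,5)$, which is outside $\fspace{1}{r}$. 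Indeed, robot~$1$'s interactive update leaves it at $(10,0)$ and the aggregate after its move is still $(25,5)\ne T^{*}$; it is robot~$3$, the \emph{last} robot in the sweep, whose action changes to $(0,10)$. So neither ``the sum after robot~$1$ equals $T^{*}$'' nor ``robots $i\ge 2$ keep their greedy actions'' holds, and the dual-feasibility step you flag as the main obstacle cannot be rescued: the $\mathbf{1}^\top$ bound gives no coordinate-wise control over the baseline shift and therefore does not force the shift into the polar of the tangent cone at $\action{r}{i,g}$.

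The paper's argument avoids tracking which robot moves. It uses the hyperplane lemma to confine every intermediate aggregate to $\{\mathbf{1}^\top v=\ndevice\ncache\}$, then argues by contradiction: if the aggregate $v_{\text{iter}}$ after one full sweep differs from $T^{*}$, the direction $\Delta v=T^{*}-v_{\text{iter}}$ lies in the hyperplane, yet each robot's final action is already optimal against the others and hence cannot move along $\Delta v$ within its own $\fspace{i}{r}$; summing shows $T^{*}=v_{\text{iter}}+\Delta v\notin\fspace{}{r}$, contradicting feasibility of the \Oracle\ solution. The key difference from your plan is that the paper reasons about the \emph{aggregate} and a single blocking direction, rather than attempting a per-robot KKT certification that the greedy action survives the shift.
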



The condition in Thm. \ref{theorem:converge_one} holds for all rounds except for the last round that reaches the target distribution, upon which data collection terminates. All our theory assumes that all actions $\action{r}{i} \in \mathbb{R}^{N_{\text{class}}}$ can realize any feasible real-valued vector. However, in reality, we will only have an integer-valued action vector since we can only upload a discrete set of images, which becomes an integer programming problem.
However, for real-world datasets with thousands of images, we can just round the continuous solution
to get a very close approximation to the (generally intractable) integer case.



\section{Experiments and Conclusion}
We now compare our Alg. \ref{alg:train} with benchmark methods on four diverse datasets. The first two datasets of \texttt{MNIST} \cite{lecun2010mnist} and \texttt{CIFAR-10} \cite{Krizhevsky09learningmultiple} serve as proof-of-concepts for the domains of handwritten digit and common object classification. Then, we use the \texttt{Adverse-Weather} dataset \cite{adverer2022umich}, which contains tens of thousands of images to train self-driving vehicles to classify rain, fog, snow, sleet, overcast, sunny, and cloudy
driving conditions. To show the generality of our theory, we then extend to the state-of-the-art Berkeley Deep Drive (\texttt{DeepDrive}) dataset \cite{yu2020bdd100k}, which has $100$K images of various weather conditions and road scenarios for self-driving cars. 

\textit{Comparison Metric:} To compare all methods, we use the $L_2$-norm (the optimization objective) between the target $\datasettarget$ and the current cloud dataset $\clouddataset{r}$. For statistical confidence, all experiments are repeated for more than $10$ times with different random seeds that capture uncertainty in sampling from the confusion matrix $C^r_i$ and observing different distributions of local data per robot. Further experiment parameters are detailed in the Appendix \ref{sec:exp_setup}. We compare the following methods:

\begin{enumerate}[leftmargin=*,noitemsep,topsep=0pt]
    \item \Greedy \space solves the optimization program in Eq. \ref{eq:indvopt} \textit{individually} per robot by minimizing the $L_2$-norm between the target and cloud distribution without information about other robots. 
    \item \Oracle \space solves the optimization program in Eq. \ref{eq:oracleopt}. It perfectly knows all incoming class data distributions $p^r_i(y)$ and confusion matrices $C^r_i$ for all robots $i$ and thus calculates the optimal action for each robot in one common optimization problem (Eq. \ref{eq:oracleopt}). 
    \item \Uniform \space is a deterministic policy which assigns the same probabilities to all classes for each robot, i.e. $\action{r}{i} = \left[ \frac{1}{\nclass} \ldots \frac{1}{\nclass} \right]$. It represents a simple heuristic for equally sampling all classes.
    \item \Lowerbound \space (derived in Lemma \ref{lemma:lowerbound}) is the lower bound of the objective function of the \Oracle\space policy for a given target dataset, $\datasettarget$, current cloud dataset, $\clouddataset{r}$, and local data distribution $p^r_i(y)$. It represents how well can sample in the absence of perceptual uncertainty.
    \item \Interactive \space runs our Alg. \ref{alg:train}. 
        It is not an \textit{Oracle} policy, as it only shares the action taken by other robots and not the actual class data distribution, $p^r_i(y)$, nor the confusion matrix.
\end{enumerate}

\begin{figure*}[t]
    \includegraphics[width=1.0\columnwidth, height = 16cm]{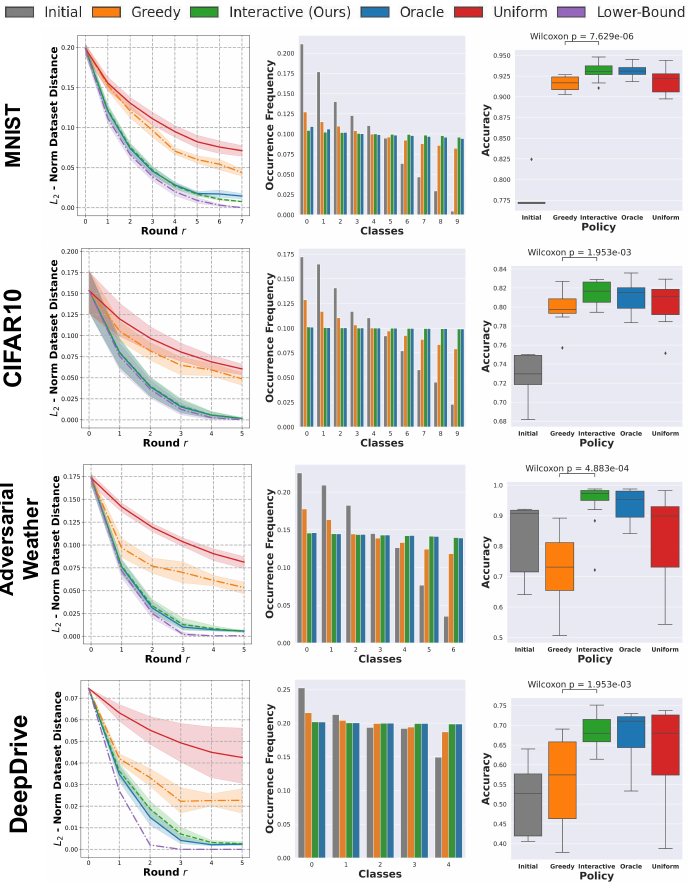}
    \caption{\small{\textbf{Our game-theoretic \Interactive \space policy outperforms benchmarks and converges to the \Oracle.} Each row is a different dataset. \textit{Column 1:} As expected by our theory, \Interactive \space minimizes the $L_2$-norm distance (optimization objective, y-axis) better than \Greedy \space and matches the omniscient \Oracle. \textit{Column 2:} Clearly, \Interactive \space achieves a much more balanced distribution of classes
    (target distribution is uniform) than benchmarks. \textit{Column 3:} Since \Interactive \space achieves a more balanced dataset, this experimentally translates to a higher DNN accuracy (statistically significant) on a held-out test dataset. }}
     \label{fig:combined_metrics}
\vspace{-1em}
\end{figure*}

\textbf{Results: }
Our experimental results (Fig. \ref{fig:combined_metrics}) demonstrate that our proposed \Interactive \space policy performs as well as the \Oracle \space, as proved in Thm. \ref{theorem:orable_inter_same}. Additionally, we demonstrate that our \Interactive \space policy is much better than the \Greedy \space and \Uniform \space policies on all datasets. Finally, we show that no action policy can perform better than the derived \Lowerbound \space action policy. 
\newpage
\textbf{\textit{Does cooperation minimize distance to the target data distribution?}}

Our optimization objective is to minimize the $L_2$-norm distance between the cloud dataset and target data distribution, which we plot in the first column. 
Clearly, our \Interactive \space policy significantly outperforms the \Greedy \space and \Uniform \space policies on all datasets. 
Specifically, we beat the \Greedy \space policy by $23.6\%, 44.8\%, 40.3\%, 38.7\%$ in $L_2$-norm distance on the \texttt{MNIST}, \texttt{CIFAR-10}, \texttt{Adverse-Weather}, and \texttt{DeepDrive} datasets respectively. These benefits arise because the \Interactive \space policy allows robots to coordinate the rare classes they upload, but the \Greedy \space policy might lead to uncoordinated uploading of
redundant data. Moreover, our \Interactive \space policy performs nearly identically as the \Oracle \space method, with small deviations due to imperfect vision models and randomness in local data distributions between trials. This is natural since we
proved that the \Interactive \space action policy reaches the same optimal value in expectation as the \Oracle \space policy in Thm. \ref{theorem:orable_inter_same}. Finally, we observe that no action policy outperforms our \Lowerbound \space policy derived in Lemma \ref{lemma:lowerbound}. 

\textbf{\textit{Does cooperation achieve more balanced datasets?}}

In column two, we see that the initial data distribution among robots (gray) is highly imbalanced since they operate in diverse contexts. However, we see that our \Interactive \space policy (green) achieves a much more balanced dataset distribution compared to \Greedy \space (orange), which is natural since the convex objective minimizes the distance to a uniform distribution. 

\textbf{\textit{What is the final accuracy of trained models?}}

In column 3, we show the final accuracy of re-training DNN classification models on the datasets accrued by each method in the cloud. Importantly, our proposed \Interactive \space action policy leads to better accuracy gains than the \Greedy \space and \Uniform \space action policies. We beat the \Greedy \space policy by $1.4\%, 1.7\%, 21.9\%, 12.4\%$ in accuracy on the \texttt{MNIST}, \texttt{CIFAR-10}, \texttt{Adverse-Weather}, and \texttt{DeepDrive} datasets respectively. This is because the \Interactive \space action policy makes sure we collect classes lacking in the current cloud dataset, thus preventing class-imbalance issues in model training. While our theory only addresses
convex distances between dataset distributions (column 1 and 2), we show strong experimental results for re-training non-convex DNN classifiers. 
The \Interactive \space and \Oracle \space algorithms lead to slightly different final accuracies since they can potentially upload a different set of images and there is not a closed form relationship between the number of images and accuracy of a non-convex DNN. 
As detailed in the Appendix, \Interactive \space achieves very close to state-of-the-art accuracy for each dataset with only a limited set of uploaded datapoints. DNN architectures are also detailed in the Appendix. 
Collectively, these results closely align with our theory and show strong experimental benefits on real-world data. 


\textbf{Limitations: } Our work assumes each robot can interact, which does not scale for extremely large \rebuttal{fleets}. Moreover, we assume that we sample images according to a classification model, even though we can train models for other tasks on the uploaded images. In future work, we aim to extend our theoretical guarantees for sub-clusters of communicating robots and cluster a continuous data distribution based on similar embeddings that serve as
virtual ``classes''. \rebuttal{Such an ability to generalize beyond discrete classes may enable our algorithm to scale to learning data-driven control policies.} 

\textbf{Conclusion: }
This paper presents a theoretically-grounded, cooperative data sampling policy for networked robotic fleets, which converges to an oracle policy upon termination. Additionally, it converges in a single iteration under a mild practical assumption, which allows communication efficiency on real-world AV datasets. Our approach is a first step towards an increasingly timely problem as today's AV fleets measure terabytes of heterogenous data in diverse operating contexts \cite{intel}. 
In future work, we plan to develop policies that approximate the oracle solution when only a subset of robots can form coalitions and certify their resilience to adversarial node failures. 



\section*{Acknowledgements}
This material is based upon work supported in part by Cisco Systems, Inc. under MRA MAG00000005. This material is also based upon work supported by the National Science Foundation under grant no. 2148186 and is supported in part by funds from federal agency and industry partners as specified in the Resilient \& Intelligent NextG Systems (RINGS) program.

\clearpage
\bibliography{ref/swarm, ref/external}

\begin{thebibliography}{41}
\providecommand{\natexlab}[1]{#1}
\providecommand{\url}[1]{\texttt{#1}}
\expandafter\ifx\csname urlstyle\endcsname\relax
  \providecommand{\doi}[1]{doi: #1}\else
  \providecommand{\doi}{doi: \begingroup \urlstyle{rm}\Url}\fi

\bibitem[int(2016)]{intel}
Data is the new oil in the future of automated driving.
\newblock
  \url{https://newsroom.intel.com/editorials/krzanich-the-future-of-automated-driving/#gs.LoDUaZ4b},
  2016.
\newblock [Online; accessed 10-June-2021].

\bibitem[Qua(2021)]{Qualcomm5G}
10gbps 5g data speeds - qualcomm and keysight achieve industry-first milestone.
\newblock
  \url{https://datacenternews.asia/story/10gbps-5g-data-speeds-qualcomm-and-keysight-achieve-industry-first-milestone},
  2021.
\newblock [Online; accessed 14-June-2022].

\bibitem[Nash~Jr(1950)]{nash1950equilibrium}
J.~F. Nash~Jr.
\newblock Equilibrium points in n-person games.
\newblock \emph{Proceedings of the national academy of sciences}, 36\penalty0
  (1):\penalty0 48--49, 1950.

\bibitem[Nash(1951)]{nash1951non}
J.~Nash.
\newblock Non-cooperative games.
\newblock \emph{Annals of mathematics}, pages 286--295, 1951.

\bibitem[Rosen(1965)]{Rosen1965Existence}
J.~B. Rosen.
\newblock Existence and uniqueness of equilibrium points for concave n-person
  games.
\newblock \emph{Econometrica}, 33\penalty0 (3):\penalty0 520--534, 1965.
\newblock ISSN 00129682, 14680262.
\newblock URL \url{http://www.jstor.org/stable/1911749}.

\bibitem[Yu et~al.(2020)Yu, Chen, Wang, Xian, Chen, Liu, Madhavan, and
  Darrell]{yu2020bdd100k}
F.~Yu, H.~Chen, X.~Wang, W.~Xian, Y.~Chen, F.~Liu, V.~Madhavan, and T.~Darrell.
\newblock Bdd100k: A diverse driving dataset for heterogeneous multitask
  learning.
\newblock In \emph{Proceedings of the IEEE/CVF conference on computer vision
  and pattern recognition}, pages 2636--2645, 2020.

\bibitem[gol(2019)]{goldbergwebsite}
Cloud robotics and automation.
\newblock \url{http://goldberg.berkeley.edu/cloud-robotics/}, 2019.
\newblock [Online; accessed 02-Jan.-2019].

\bibitem[Kehoe et~al.(2015)Kehoe, Patil, Abbeel, and Goldberg]{kehoe2015survey}
B.~Kehoe, S.~Patil, P.~Abbeel, and K.~Goldberg.
\newblock A survey of research on cloud robotics and automation.
\newblock \emph{IEEE Trans. Automation Science and Engineering}, 12\penalty0
  (2):\penalty0 398--409, 2015.

\bibitem[Kuffner(2010)]{kuffner2010cloud}
J.~Kuffner.
\newblock Cloud-enabled robots in: Ieee-ras international conference on
  humanoid robots.
\newblock \emph{Piscataway, NJ: IEEE}, 2010.

\bibitem[Chinchali et~al.(2020)Chinchali, Pergament, Nakanoya, Cidon, Zhang,
  Bharadia, Pavone, and Katti]{Chinchali2010HarvestNetMV}
S.~Chinchali, E.~Pergament, M.~Nakanoya, E.~Cidon, E.~Zhang, D.~Bharadia,
  M.~Pavone, and S.~Katti.
\newblock Sampling training data for continual learning between robots and the
  cloud.
\newblock In \emph{2020 International Symposium on Experimental Robotics
  (ISER)}, Valetta, Malta, 2020.

\bibitem[Goldberg and Kehoe(2013)]{goldberg2013cloud}
K.~Goldberg and B.~Kehoe.
\newblock Cloud robotics and automation: A survey of related work.
\newblock \emph{EECS Department, University of California, Berkeley, Tech. Rep.
  UCB/EECS-2013-5}, 2013.

\bibitem[Chinchali et~al.(2021)Chinchali, Sharma, Harrison, Elhafsi, Kang,
  Pergament, Cidon, Katti, and Pavone]{chinchali2019RSS}
S.~Chinchali, A.~Sharma, J.~Harrison, A.~Elhafsi, D.~Kang, E.~Pergament,
  E.~Cidon, S.~Katti, and M.~Pavone.
\newblock Network offloading policies for cloud robotics: a learning-based
  approach.
\newblock \emph{Autonomous Robots}, 45\penalty0 (7):\penalty0 997--1012, 2021.
\newblock \doi{10.1007/s10514-021-09987-4}.
\newblock URL \url{https://doi.org/10.1007/s10514-021-09987-4}.

\bibitem[Tanwani et~al.(2019)Tanwani, Mor, Kubiatowicz, Gonzalez, and
  Goldberg]{tanwani2019fog}
A.~K. Tanwani, N.~Mor, J.~D. Kubiatowicz, J.~E. Gonzalez, and K.~Goldberg.
\newblock A fog robotics approach to deep robot learning: Application to object
  recognition and grasp planning in surface decluttering.
\newblock \emph{2019 International Conference on Robotics and Automation
  (ICRA)}, pages 4559--4566, 2019.

\bibitem[Geng et~al.(2021)Geng, Zhang, Li, Akcin, Tang, and
  Chinchali]{geng2021decentralized}
Y.~Geng, D.~Zhang, P.-h. Li, O.~Akcin, A.~Tang, and S.~P. Chinchali.
\newblock Decentralized sharing and valuation of fleet robotic data.
\newblock In \emph{5th Annual Conference on Robot Learning, Blue Sky Submission
  Track}, 2021.

\bibitem[Pujol and Dustdar(2021)]{pujol2021fog}
V.~C. Pujol and S.~Dustdar.
\newblock Fog robotics—understanding the research challenges.
\newblock \emph{IEEE Internet Computing}, 25\penalty0 (05):\penalty0 10--17,
  sep 2021.
\newblock ISSN 1941-0131.
\newblock \doi{10.1109/MIC.2021.3060963}.

\bibitem[Cohn et~al.(1996)Cohn, Ghahramani, and Jordan]{Cohn96activelearning}
D.~A. Cohn, Z.~Ghahramani, and M.~I. Jordan.
\newblock Active learning with statistical models.
\newblock \emph{JOURNAL OF ARTIFICIAL INTELLIGENCE RESEARCH}, 4:\penalty0
  129--145, 1996.

\bibitem[Gal et~al.(2017)Gal, Islam, and Ghahramani]{pmlr-v70-gal17a}
Y.~Gal, R.~Islam, and Z.~Ghahramani.
\newblock Deep {B}ayesian active learning with image data.
\newblock In D.~Precup and Y.~W. Teh, editors, \emph{Proceedings of the 34th
  International Conference on Machine Learning}, volume~70 of \emph{Proceedings
  of Machine Learning Research}, pages 1183--1192. PMLR, 06--11 Aug 2017.
\newblock URL \url{http://proceedings.mlr.press/v70/gal17a.html}.

\bibitem[Settles(2009)]{settles2009active}
B.~Settles.
\newblock Active learning literature survey.
\newblock Technical report, University of Wisconsin-Madison Department of
  Computer Sciences, 2009.

\bibitem[Tong(2001)]{tong2001active}
S.~Tong.
\newblock \emph{Active learning: theory and applications}, volume~1.
\newblock Stanford University USA, 2001.

\bibitem[Yang et~al.(2018)Yang, Ren, Zhu, and Zhang]{ActLearn4wireless}
K.~Yang, J.~Ren, Y.~Zhu, and W.~Zhang.
\newblock Active learning for wireless iot intrusion detection.
\newblock \emph{IEEE Wireless Communications}, 25\penalty0 (6):\penalty0
  19--25, 2018.
\newblock \doi{10.1109/MWC.2017.1800079}.

\bibitem[fl(2017)]{fl}
Federated learning: Collaborative machine learning without centralized training
  data.
\newblock
  \url{https://ai.googleblog.com/2017/04/federated-learning-collaborative.html},
  2017.
\newblock [Online; accessed 15-June-2021].

\bibitem[Kone{\v{c}}n{\'y} et~al.(2016)Kone{\v{c}}n{\'y}, McMahan, Yu,
  Richt{\'{a}}rik, Suresh, and Bacon]{fedler}
J.~Kone{\v{c}}n{\'y}, H.~B. McMahan, F.~X. Yu, P.~Richt{\'{a}}rik, A.~T.
  Suresh, and D.~Bacon.
\newblock Federated learning: Strategies for improving communication
  efficiency.
\newblock \emph{CoRR}, abs/1610.05492, 2016.
\newblock URL \url{http://arxiv.org/abs/1610.05492}.

\bibitem[Xianjia et~al.(2021)Xianjia, Queralta, Heikkonen, and
  Westerlund]{XIANJIA2021135}
Y.~Xianjia, J.~P. Queralta, J.~Heikkonen, and T.~Westerlund.
\newblock Federated learning in robotic and autonomous systems.
\newblock \emph{Procedia Computer Science}, 191:\penalty0 135--142, 2021.
\newblock ISSN 1877-0509.
\newblock \doi{https://doi.org/10.1016/j.procs.2021.07.041}.
\newblock URL
  \url{https://www.sciencedirect.com/science/article/pii/S187705092101437X}.
\newblock The 18th International Conference on Mobile Systems and Pervasive
  Computing (MobiSPC), The 16th International Conference on Future Networks and
  Communications (FNC), The 11th International Conference on Sustainable Energy
  Information Technology.

\bibitem[Rizk et~al.(2021)Rizk, Vlaski, and Sayed]{OptSampling4FL}
E.~Rizk, S.~Vlaski, and A.~H. Sayed.
\newblock Optimal importance sampling for federated learning.
\newblock In \emph{ICASSP 2021 - 2021 IEEE International Conference on
  Acoustics, Speech and Signal Processing (ICASSP)}, pages 3095--3099, 2021.
\newblock \doi{10.1109/ICASSP39728.2021.9413655}.

\bibitem[Armacki et~al.(2022)Armacki, Bajovic, Jakovetic, and
  Kar]{armacki2022personalized}
A.~Armacki, D.~Bajovic, D.~Jakovetic, and S.~Kar.
\newblock Personalized federated learning via convex clustering.
\newblock \emph{arXiv preprint arXiv:2202.00718}, 2022.

\bibitem[Xing et~al.(2020)Xing, Simeone, and Bi]{WirelessDSGD}
H.~Xing, O.~Simeone, and S.~Bi.
\newblock Decentralized federated learning via sgd over wireless d2d networks.
\newblock In \emph{2020 IEEE 21st International Workshop on Signal Processing
  Advances in Wireless Communications (SPAWC)}, pages 1--5, 2020.
\newblock \doi{10.1109/SPAWC48557.2020.9154332}.

\bibitem[Gu et~al.(2021)Gu, Xu, Huo, Deng, and Huang]{PrivacyAsyncFL}
B.~Gu, A.~Xu, Z.~Huo, C.~Deng, and H.~Huang.
\newblock Privacy-preserving asynchronous vertical federated learning
  algorithms for multiparty collaborative learning.
\newblock \emph{IEEE Transactions on Neural Networks and Learning Systems},
  pages 1--13, 2021.
\newblock \doi{10.1109/TNNLS.2021.3072238}.

\bibitem[Niknam et~al.(2020)Niknam, Dhillon, and Reed]{FLSurveyPaper}
S.~Niknam, H.~S. Dhillon, and J.~H. Reed.
\newblock Federated learning for wireless communications: Motivation,
  opportunities, and challenges.
\newblock \emph{IEEE Communications Magazine}, 58\penalty0 (6):\penalty0
  46--51, 2020.
\newblock \doi{10.1109/MCOM.001.1900461}.

\bibitem[Caldas et~al.(2018)Caldas, Konečny, McMahan, and
  Talwalkar]{caldas2018expanding}
S.~Caldas, J.~Konečny, H.~B. McMahan, and A.~Talwalkar.
\newblock Expanding the reach of federated learning by reducing client resource
  requirements, 2018.

\bibitem[Kim and Lee(2002)]{KIM20021219}
W.~Kim and K.~Lee.
\newblock The existence of nash equilibrium in n-person games with c-concavity.
\newblock \emph{Computers \& Mathematics with Applications}, 44\penalty0
  (8):\penalty0 1219--1228, 2002.
\newblock ISSN 0898-1221.
\newblock \doi{https://doi.org/10.1016/S0898-1221(02)00228-6}.
\newblock URL
  \url{https://www.sciencedirect.com/science/article/pii/S0898122102002286}.

\bibitem[Wang et~al.(1988)Wang, Wang, and Jianhua]{wang1988theory}
J.~Wang, D.~Wang, and W.~Jianhua.
\newblock \emph{The Theory of Games}.
\newblock Oxford mathematical monographs. Tsinghua University Press, 1988.
\newblock ISBN 9780198535607.
\newblock URL \url{https://books.google.com/books?id=JSTvAAAAMAAJ}.

\bibitem[Driessen(2013)]{driessen2013cooperative}
T.~Driessen.
\newblock \emph{Cooperative Games, Solutions and Applications}.
\newblock Theory and Decision Library C. Springer Netherlands, 2013.
\newblock ISBN 9789401577878.
\newblock URL \url{https://books.google.com.jm/books?id=1yDtCAAAQBAJ}.

\bibitem[Stuart(2001)]{Stuart2001}
H.~W. Stuart.
\newblock \emph{Cooperative Games and Business Strategy}, pages 189--211.
\newblock Springer US, Boston, MA, 2001.
\newblock ISBN 978-0-306-47568-9.
\newblock \doi{10.1007/0-306-47568-5_6}.
\newblock URL \url{https://doi.org/10.1007/0-306-47568-5_6}.

\bibitem[Tijs(2003)]{tijs2003introduction}
S.~Tijs.
\newblock \emph{Introduction to game theory}.
\newblock Springer, 2003.

\bibitem[Han et~al.(2012)Han, Niyato, Saad, Ba{\c{s}}ar, and
  Hj{\o}rungnes]{han2012game}
Z.~Han, D.~Niyato, W.~Saad, T.~Ba{\c{s}}ar, and A.~Hj{\o}rungnes.
\newblock \emph{Game theory in wireless and communication networks: theory,
  models, and applications}.
\newblock Cambridge university press, 2012.

\bibitem[Dee(2020)]{DeepSpaceNetwork}
Mars reconnaissance orbiter: Communications with earth.
\newblock \url{https://mars.nasa.gov/mro/mission/communications/}, 2020.
\newblock [Online; accessed 18-Oct.-2020].

\bibitem[Kullback and Leibler(1951)]{kullback1951information}
S.~Kullback and R.~A. Leibler.
\newblock On information and sufficiency.
\newblock \emph{The annals of mathematical statistics}, 22\penalty0
  (1):\penalty0 79--86, 1951.

\bibitem[LeCun et~al.(2010)LeCun, Cortes, and Burges]{lecun2010mnist}
Y.~LeCun, C.~Cortes, and C.~Burges.
\newblock Mnist handwritten digit database.
\newblock \emph{ATT Labs [Online]. Available:
  http://yann.lecun.com/exdb/mnist}, 2, 2010.

\bibitem[Krizhevsky(2009)]{Krizhevsky09learningmultiple}
A.~Krizhevsky.
\newblock Learning multiple layers of features from tiny images.
\newblock Technical report, 2009.

\bibitem[adv()]{adverer2022umich}
Adverse weather dataset.
\newblock \url{http://sar-lab.net/adverse-weather-dataset/}.

\bibitem[Durand(2018)]{AnalysisBRPotentialGames}
S.~Durand.
\newblock \emph{{Analysis of Best Response Dynamics in Potential Games}}.
\newblock Theses, {Universit{\'e} Grenoble Alpes}, Dec. 2018.
\newblock URL \url{https://tel.archives-ouvertes.fr/tel-02953991}.

\end{thebibliography}

\clearpage
\section{Appendix}
The appendix is organized as follows: 
\begin{enumerate}
    \item Subsections \ref{app:whypotential} to \ref{sec:calculating_conditional_probabilities} describe the preliminaries.
    \item Subsection \ref{sec:exp_setup} explains the datasets, experimental parameters, and DNN architectures used in this work.
    \item Subsections \ref{sec:performance_gap} to \ref{sec:total_num_messages} give proofs for all theorems. 
\end{enumerate}

\subsection{Why A Potential Game?}

A potential function in a game is defined in Chapter 8 of \cite{tijs2003introduction}. 
It is a function indicating the incentives of all players (in our case robots), and any game with a potential is called a potential game. 
Typically, the goal of a player is to maximize its incentive expressed by the potential function. In our case, minimizing the loss function $\kl{r+1}$ in Eq. \ref{eq:dataset_inter} is the common goal for all robots, so the potential function is the negative of the loss function,  $-\kl{r+1}$. 
Note that the potential function $-\kl{r+1}$ is concave since the loss function $\kl{r+1}$ is convex.
\label{app:whypotential}


\subsection{Confusion Matrix} 

The confusion matrix of robot $i$ at round $r$ is defined as $C^{r}_{i}$, obtained by calculating the validation accuracy from a validation dataset.
Its $j$-th row represents the probability vector of classifying a data-point of class $j$ to different classes. 
If the confusion matrix is an identity matrix, it means that the classifier is perfect with $100\%$ accuracy. The matrix is:
\label{sec:confusion_matrix}

\begin{equation} \label{eq:confusion_mat}
\begin{aligned}
    C^{r}_{i} = \begin{bmatrix}
            \condhy{1}{1}  & \ldots & \condhy{N_{\text{class}}}{1} \\
            \condhy{1}{2}  & \ldots & \condhy{N_{\text{class}}}{2} \\
             \ldots   & \ldots &  \ldots \\
            \condhy{1}{N_{\text{class}}}  & \ldots & \condhy{N_{\text{class}}}{N_{\text{class}}} \\
          \end{bmatrix}.
\end{aligned}
\end{equation}

\subsection{Calculating the correct conditional probabilities} 
\label{sec:calculating_conditional_probabilities}

As mentioned in Sec. \ref{sec:formulation}, the robot's transmitted dataset $\action{r}{i}$ is calculated from the predicted class labels $\hat{y}_j$ and not the true class labels $y_j$, which are not available on-robot. However, we can use predicted class probabilities $p_i^r(\hat{y}_j)$ to estimate true class probabilities $p_i^r(y_j)$ by: $p_i^r(y_j) = \sum_{k=1}^{N_{\text{class}}} p_i^r(\hat{y}_k) \cdot \condyh{j}{k} $. 

We can obtain the conditional probability $\condyh{j}{j}$ by 
$\condyh{j}{j} = \frac{\condhy{j}{j} \cdot p^r_{i}(y_j)}{p_i^r(\hat{y}_j)}$ from the confusion matrix $C^{r}_{i}$ and $p_{i}^r(\hat{y}_j)$ can be calculated from the model inference on robot $i$. Note that $p^r_{i}(y_j)$ can be \textit{estimated} using a Bayesian Filter since we upload data at previous round $r-1$, which is assigned ground-truth labels. 


%

\subsection{Experiments}
\label{sec:exp_setup}

In the experiments, we simulated a system of multiple robots observing different image distributions $p_i^r(y)$ with the aim of sampling correct images to make the cloud dataset $\mathcal{D}_c^r$ as close as possible to the uniform target dataset $\mathcal{D}_{\text{target}}$. First, an initial dataset $\mathcal{D}_c^0$ with random class distributions is selected to train the initial classification model $f(x; \networkparam{0}_i)$. Next, the classification model is trained on the initial dataset $\mathcal{D}_c^0$ and its confusion matrix $C_{i}^0$ is calculated on the validation dataset. All robots have the same vision model in a simulation, thus the same confusion matrix. However, their incoming class distributions $p_i^r(y)$ are quite different, so each robot's feasible space $\fspace{i}{r}$ is different. In each round $r$, the robots label the images with their own classification model and solve the convex optimization problem to determine label allocations in the cache. At the end of each round $r$, sampled images are uploaded to the shared cloud, labeled by a human expert, and added to the cloud dataset with the correct labels. The cloud dataset statistics are updated and shared with all robots. When all sampling rounds are finished, the classification model is retrained on the final cloud dataset, which contains the initial dataset and sampled images from all robots. All the DNNs are written in PyTorch, and the \textsc{CVXPy} package is used to solve the convex optimization problem. 

In all experiments, we have divided our dataset into three non-overlapping parts: training, validation, and testing datasets. The training dataset is used to create the initial datasets and the images observed by robots. The validation and testing datasets are used to calculate the confusion matrix and the final accuracy, respectively. 

We now explain the datasets, the simulation parameters, training/validation/testing splits, and DNN training hyperparameters used in the simulations. 

\subsubsection{MNIST Dataset}

The \texttt{MNIST} dataset is a digit classification dataset consisting of 70,000
$28\times28$ grayscale images with ten classes. The dataset consists of 60,000 training images and 10,000 testing images. 

\paragraph{Simulation Parameters:}
For the \texttt{MNIST} dataset, we simulated $\ndevice = 20$ robots for 7 rounds each observing 2000 images and sharing only $\ncache = 2$ images with the cloud. The initial dataset size is set to $N_{\mathcal{D}_c^0} = 200 $ and at the end of the rounds a dataset of size $N_{\mathcal{D}_c^7} = 480 $ is accumulated.

\paragraph{Training, Validation, and Testing Split:}
We divided the original training dataset into training and validation datasets of sizes 54,000 and 6000, respectively, and used the original test dataset of size 10,000. Thus 0.3\% of the overall dataset size is used in the initial vision model. At the end of data sharing, we uploaded 0.8\% of the full \texttt{MNIST} standard dataset to train the vision model. Training a model on this dataset yields a final accuracy of 93.07\% on the full held-out test dataset. This value is close to 99.91\% state-of-the-art accuracy for the full training dataset, which is very good given that our scheme uploads only a \textit{fraction} of the data. 

\paragraph{DNN and Training Hyperparameters:}
We now describe the vision model for the classification task. A DNN with four convolutional layers and two fully connected layers with ReLU activation layers is used as the classification model. Between the convolutional layers, dropout is applied with a rate of 0.3. In the convolutional layers, a kernel with a filter size of (3,3) and stride of 1 are used with a padding of 1. Finally, fully connected layers with sizes of (128,10) are used in consecutive fully connected layers. When the models are trained, a learning rate of 0.01 is used, and the batch size is set to 1000. We used the ADAM optimizer in training and used the exponential learning rate scheduler with a decay rate of 0.99. The DNN models are trained for 200 epochs. We only normalized the images before inputting them into the classification model.

\subsubsection{CIFAR-10 Dataset}

The \texttt{CIFAR-10} dataset consists of 60000 $32\times32\times3$ RGB images with ten different classes. The original dataset is divided into training and testing datasets of sizes 50000 and 10000 respectively. This dataset contains 10 object classes: airplane, automobile, bird, cat, deer, dog, frog, horse, ship, and truck. 

\paragraph{Simulation Parameters:}
For the \texttt{CIFAR-10} dataset, we simulated $\ndevice = 20$ robots for 5 rounds, each observing 5000 images and sharing only $\ncache = 200$ images with the cloud. The initial dataset size is set to $N_{\mathcal{D}_c^0} = 10000 $ and at the end of the rounds a dataset of size $N_{\mathcal{D}_c^5} = 30000 $ is accumulated.

\paragraph{Training, Validation, and Testing Split:}
We divided the original training dataset into training and validation datasets of sizes 45,000 and 5000, respectively, and used the original test dataset of size 10,000. Thus 20\% of the overall dataset size is used in the initial vision model. At the end of data sharing, we uploaded 60\% of the full \texttt{CIFAR-10} dataset. Training a model on this dataset yields a final accuracy of 81.55\% on the full held-out test dataset. This value is comparable to 91.25\% state-of-the-art accuracy for
the full training dataset, which is very good given that our scheme uploads only a \textit{fraction} of the data. 

\paragraph{DNN and Training Hyperparameters:}
We now describe the vision model for the classification task. A ResNet32 Model with 32 convolutional layers and skip connections is used as the classification model. We didn't use any pretrained weights for the vision model. When the models are trained, a learning rate of 0.1 is used, and the batch size is set to 1000. We used the ADAM optimizer in training and used an exponential learning rate scheduler with a decay rate of 0.99. The DNN models are trained for 100 epochs. During training, we applied random cropping and random horizontal flips as data augmentation methods.

\subsubsection{Adversarial-Weather Dataset}

The \texttt{Adversarial Weather} dataset consists of thousands of $720\times1280\times3$ RGB image sequences collected in various weather conditions from moving vehicles. Most of the sequences are dynamic, while some are static recordings. The classes included in the dataset are rain, fog, snow, sleet, overcast, sunny, and cloudy. These weather conditions were recorded at various times of the day: morning, afternoon, sunset, and dusk. For the simulations, we have combined the time of day labels
and the weather labels and created a total of 7 classes. Since the images are created from video sequences, we have subsampled the images once in every five frames to prevent having similar images. In the end, we created a dataset with 46025 images.

\paragraph{Simulation Parameters:}
For the \texttt{Adversarial Weather} dataset, we simulated $\ndevice = 10$ robots for 5 rounds, each observing 2000 images and share only $\ncache = 20$ images with the cloud. The initial dataset size is set to $N_{\mathcal{D}_c^0} = 1000 $ and at the end of the rounds a dataset of size $N_{\mathcal{D}_c^5} = 2000 $ is accumulated.

\paragraph{Training, Validation, and Testing Split:}
We divided the  dataset into training, validation, and test datasets of sizes 37279, 4143, and 4603, respectively. At the end of data sharing, training the model on the accumulated dataset yields a final accuracy of 94.27\% on the full held-out test dataset. 

\paragraph{DNN and Training Hyperparameters:}
A ResNet18 Model with 18 convolutional layers and skip connections is used as the classification model. We initialized the model weights with weights pre-trained on the \texttt{ImageNet} dataset and updated all layers. During training, a learning rate of 0.1 is used, and the batch size is set to 128. We used the ADAM optimizer during training and used the exponential learning rate scheduler with a decay rate of 0.99. The DNN models are trained for 50 epochs. During training, we first downsampled the images to a size of $256\times455\times3$, and applied random cropping and random horizontal flips as data augmentation methods.

\subsubsection{DeepDrive Dataset}

The \texttt{DeepDrive} dataset with 100,000 images is a driving video dataset from various cities in different weather conditions.  The dataset consists of 70000 training, 10000 validation, and 20000 testing images. However, the testing images aren't publicly available. Therefore, we only used original training and validation datasets. We used the weather labels as the target of the classification model. The weather labels included in the datasets are: rainy, snowy, clear, overcast, partly cloudy, and foggy. We had to discard the foggy classes from the simulations because this class included only 181 images. Therefore, we trained the classification model on 5 classes. 

\paragraph{Simulation Parameters:}
For the \texttt{DeepDrive} dataset, we simulated $\ndevice = 20$ robots for 5 rounds, each observing 5000 images and sharing only $\ncache = 50$ images with the cloud. The initial dataset size is set to $N_{\mathcal{D}_c^0} = 8000 $ and at the end of the rounds the dataset of size $N_{\mathcal{D}_c^5} = 13000 $ is accumulated.

\paragraph{Training, Validation, and Testing Split:}
We divided the original training dataset into training and validation datasets of sizes 36968, 24646, respectively, and used the original validation dataset as the testing dataset of size 8830. Thus 11.43\% of the overall dataset size is used in the initial vision model. At the end of data sharing, we uploaded 18.57\% of the full \texttt{DeepDrive} dataset. Training a model on this dataset yields a final accuracy of 68.10\% on the full held-out test dataset. This value is
comparable to 81.57\% state-of-the-art accuracy for the full training dataset, which is very good given that our scheme uploads only a very small fraction of the data. Moreover, our scheme beats the Greedy Benchmark by 12.4\% as shown in Fig. \ref{fig:combined_metrics}.

\paragraph{DNN and Training Hyperparameters:}
A ResNet18 Model with 18 convolutional layers and skip connections is used as the classification model. We initialized the model weights with weights pre-trained on the \texttt{ImageNet} dataset and updated all layers. During training, a learning rate of 0.1 is used, and the batch size is set to 128. We used the ADAM optimizer in training and used the exponential learning rate scheduler with a decay rate of 0.99. The DNN models are trained for 50 epochs. During training, we first downsampled the images to a size of $256\times455\times3$, and applied random cropping and random horizontal flips as data augmentation methods.

\rebuttal{
\subsubsection{Heterogenous Data Distributions for Robots}
\label{subsec:non_IID}
We now show that all experiments have heterogenous data distributions. This is a hallmark of real robotics settings that we observed from the real AV datasets. Fig. \ref{fig:agents_dist} illustrates that each agent (x-axis) has a markedly different data distribution of true class probabilities $p_i^r(y)$ than others (y-axis barplot). For the synthetic \texttt{MNIST} and \texttt{CIFAR-10} datasets, we randomly shuffled data distributions across agents. However, we
plot the \textit{true} data distributions across AVs for the real-world autonomous driving datasets.


\begin{figure*}[t]
    \includegraphics[width=1.0\columnwidth]{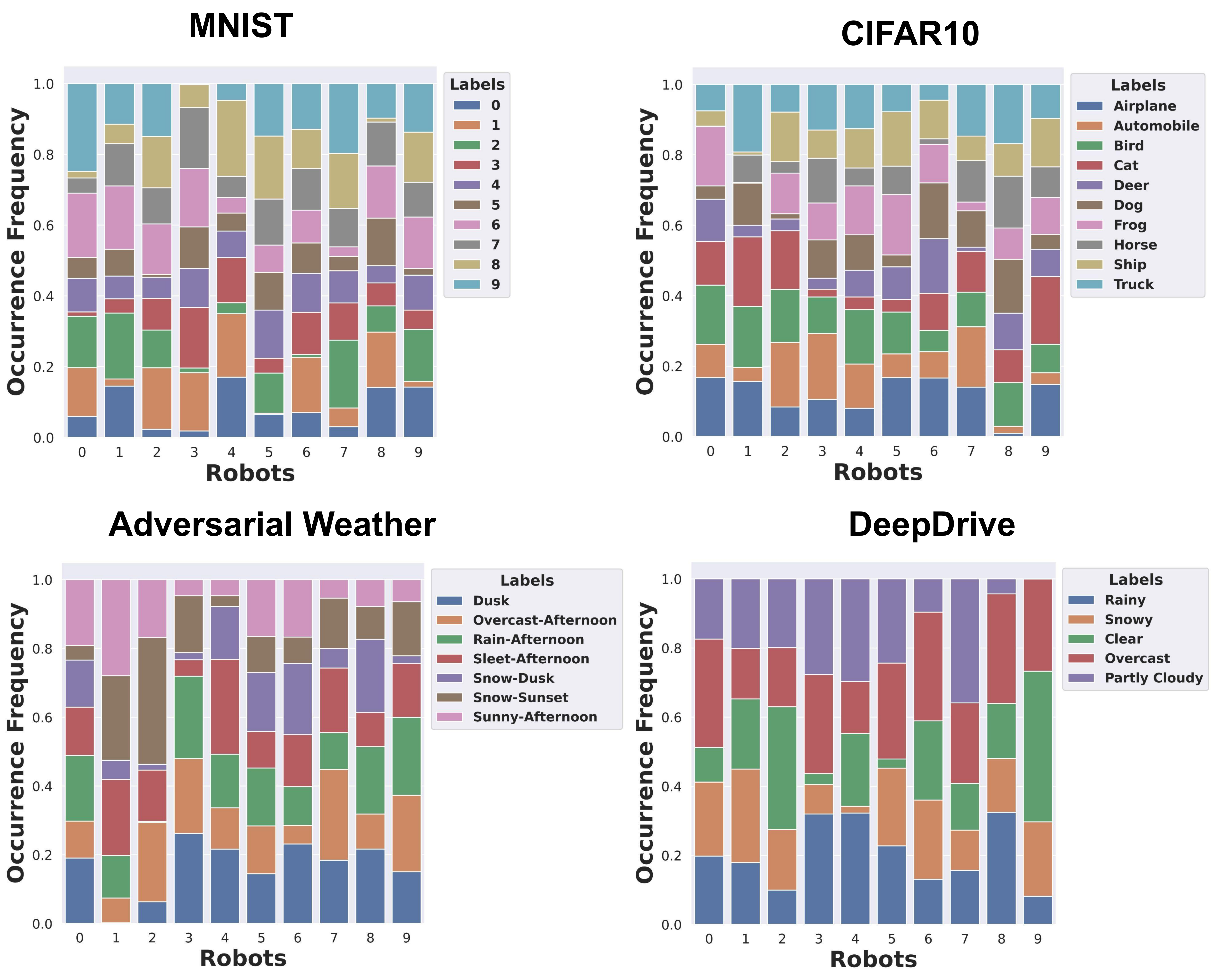}
    \caption{\rebuttal{\small{\textbf{ The true class probabilities $p_i^r(y_j)$ of 10 randomly selected agents for experiments is non-uniform:} As expected for real-life robotics settings, different robots observe non-identical, skewed data distributions in our experiments. We randomly shuffled data for the synthetic datasets (MNIST/CIFAR). However, we plot \textit{real-world} data distributions observed in the AV datasets. We randomly selected 10 agents for visual clarity.    }}}
     \label{fig:agents_dist}
\vspace{-1em}
\end{figure*}

\begin{figure*}[t]
    \includegraphics[width=1.0\columnwidth]{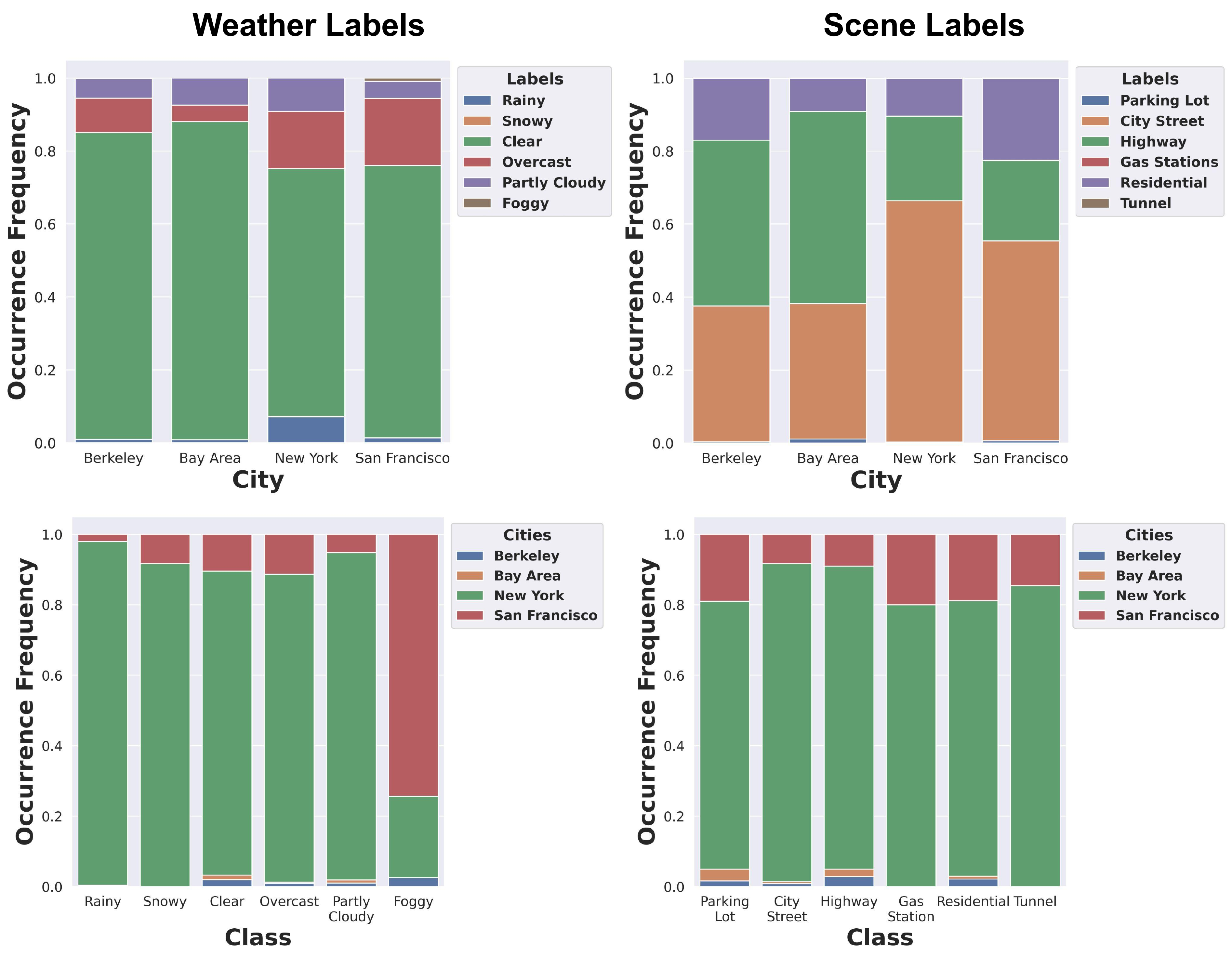}
    \caption{\rebuttal{\small{\textbf{ Heterogeneous Data Label Distributions Across Cities.}
    Even if we group across a full city, the cities differ in their label distributions. For example, a majority of scenes with rain occur in New York (left), while a majority of scenes with fog occur in SF. Thus, this paper's algorithms to coordinate data collection in heterogenous environments are needed for real-world AV datasets. 
    }}}
     \label{fig:jointbarplot}
\vspace{-1em}
\end{figure*}

\begin{figure*}[t]
    \includegraphics[width=1.0\columnwidth]{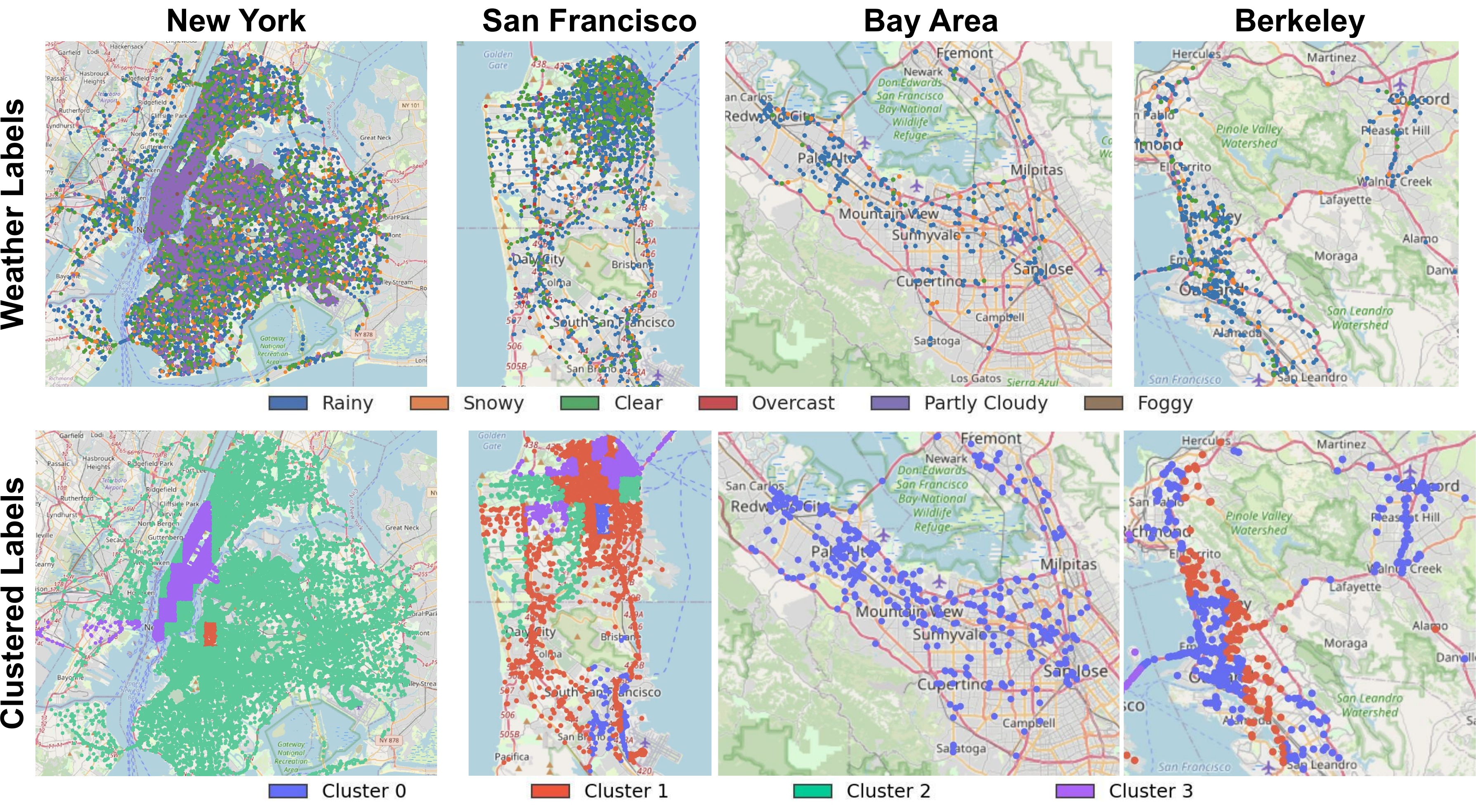}
    \caption{\rebuttal{\small{\textbf{Heterogenous data in real world AV datasets.
    } Top: The distribution of weather across cities is skewed, with much of the rain in the \texttt{DeepDrive} dataset in New York. Bottom: First, we group regions in a city and obtain a frequency distribution over classes in that area. Then, we perform k-means clustering with 4 clusters of the frequency distributions. Clearly, the frequency distributions of different weather conditions are similar locally, but very different across regions of a city.
    }}}
     \label{fig:cities_dist}
\vspace{-1em}
\end{figure*}

Fig. \ref{fig:jointbarplot} shows the distribution of classes across different cities in the \texttt{DeepDrive} dataset is also heterogeneous. Clearly, a majority of rainy scenes are in New York and a majority of foggy scenes are in SF. Moreover, New York has the highest percentage of city street scenes. 

Finally, the highly heterogenous distribution of classes can be seen in the map views from the \texttt{DeepDrive} dataset in Fig. \ref{fig:cities_dist}. Clearly, the majority of rainy points occur in NYC, especially lower Manhattan (top row). In the bottom row, we first divide the city into regions of a few miles and then create a probability distribution over the classes that appear in that sector. Then, we cluster these probability distributions into 4 meta-classes/clusters using
k-means. Clearly, close by
areas of a city have similar probability distributions over classes, but they are quite distinct in different geographic areas.

%
%
%

\clearpage
\subsubsection{Visualizing Heterogenous Data Distributions Across Space and Time}

\begin{figure}[H]
    \includegraphics[width=1.0\columnwidth,height=18cm]{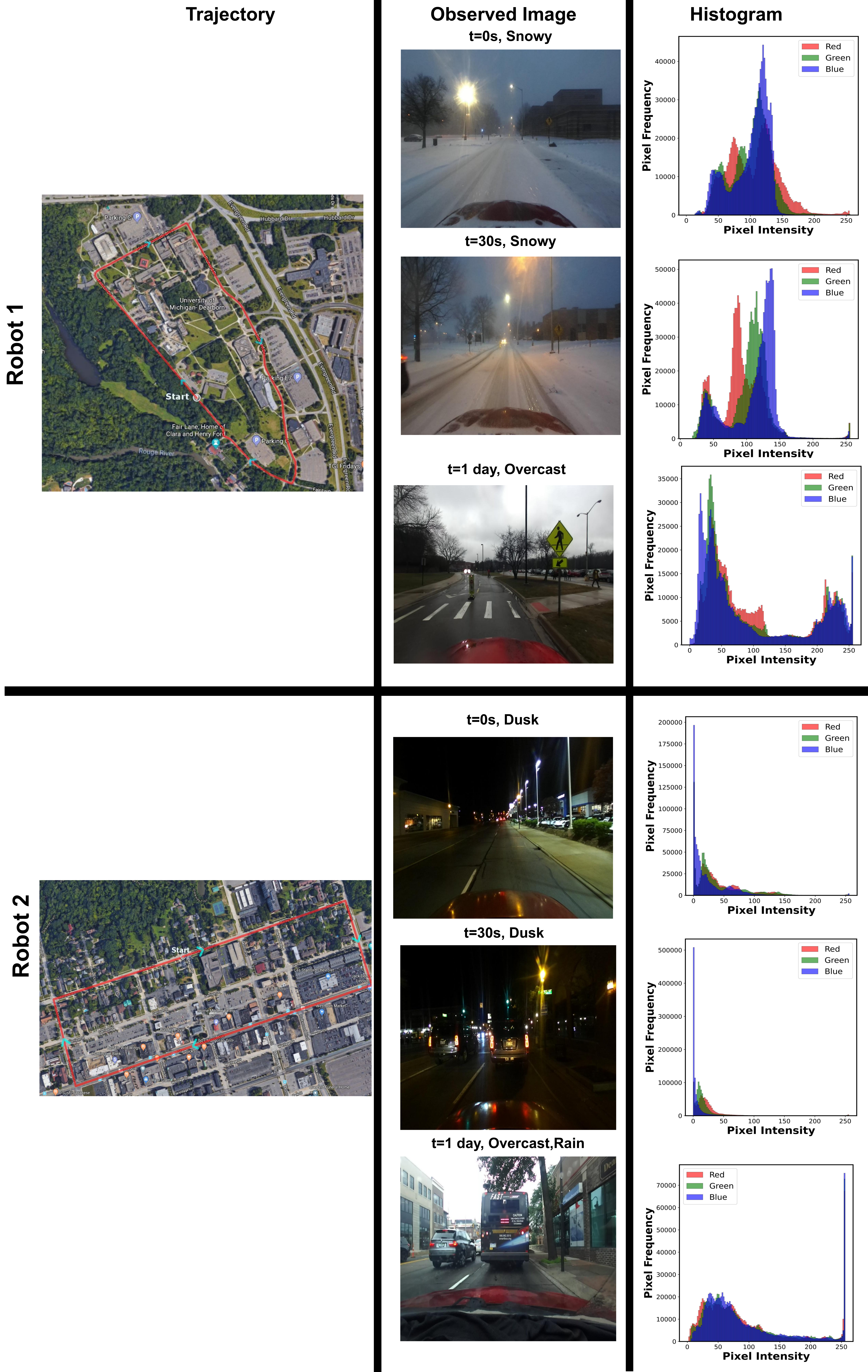}
    \caption{\small{\textbf{Observed Image Statistics Differ Across Space and Time for Real-World Robot Trajectories.} We show two robots’ trajectories on a map (only 2 for visual clarity). The robots operate in different parts of a city and observe different images - Robot 1 observes snowy and overcast images whereas Robot 2 observes Dusk and Overcast/Rain images. For all robots, we see that closely-spaced frames (30 seconds apart) have the same class. But even then, the pixel values
    are different from each other, as seen by the histogram of pixel intensities being different. For the same robots, randomly selected images from 1 day later have very different classes and pixel value distributions.  Also, when we compare different locations we see that the pixel distributions vary significantly.}}
     \label{fig:robot_imgs}
\vspace{-1em}
\end{figure}

\begin{figure*}[ht]
    \includegraphics[width=1.0\columnwidth]{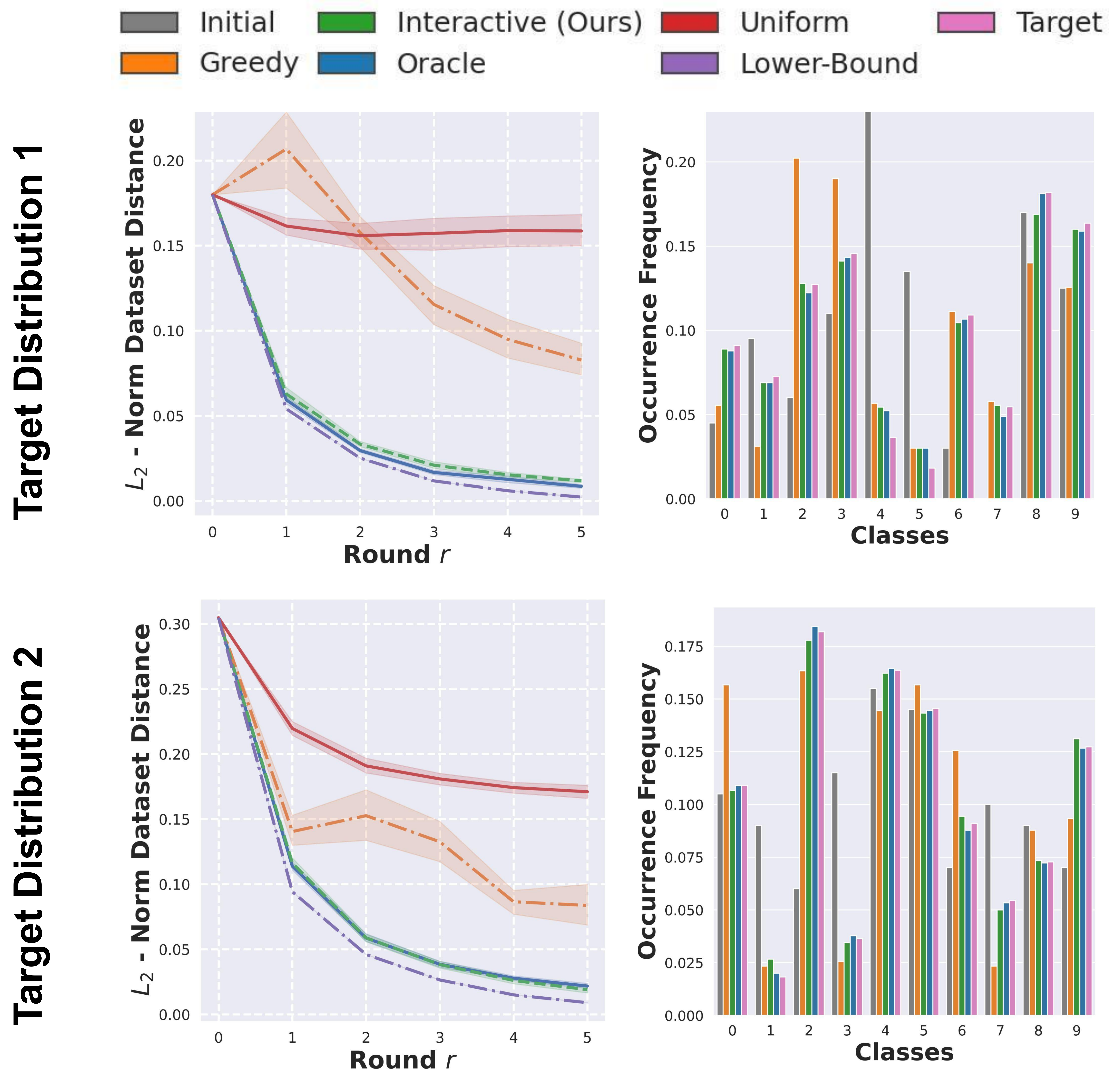}
    \vspace{1em}
    \caption{\small{\textbf{Our \Interactive \space policy quickly converges to non-uniform target data distributions.} Each row shows an independent experiment with a different, non-uniform desired target distribution set by the roboticist (pink). In the barplot, we see that the initial distribution of classes seen by heterogeneous robots is skewed. Clearly, our Interactive policy (green) quickly converges to the desired non-uniform target data distribution (left panel). Moreover, the
    \textit{final} data distribution achieved by our Interactive policy (green) closely matches the desired target distribution (pink) in the right barplots. Finally, we significantly outperform the heuristic benchmarks of greedy and uniform sampling. Thus, we verify that our algorithm works even when the target distribution is not uniform but rather any arbitrary target distribution. This is guaranteed by our theory since the target distribution is fixed and we have a convex loss function
    penalizing the difference between the current cloud distribution and the target (see Theorems 1-3).}}
     \label{fig:skewed_plots}
\end{figure*}

We now show that the distribution of observed image pixels and classes change for a robot across its trajectory (space and time). Fig. \ref{fig:robot_imgs} shows this heterogeneity across several real-world trajectories on a map. These complement the computed aggregate statistics in Appendix Figs. 4-6. We now describe visualize the heterogenous data distributions robots see in real-world autonomous driving datasets. 

\textbf{\textit{Any Given Robot Sees Different Pixels/Classes As it Moves Along its Trajectory (Space + Time) :}}

Fig. \ref{fig:robot_imgs} shows two robots’ trajectories on a map (only 2 for visual clarity). The robots operate in two different parts of a city in the Adversarial Weather dataset within different environments. Robot 1 observes mostly snowy and overcast images, whereas Robot 2 observes dusk and overcast images. For any two randomly selected frames close together (30 seconds apart), the images are temporally correlated so are not independent. Further, for the same robot, randomly selected images 1 day later have totally different classes (the scene transitions from snow to overcast or dusk to rain). Therefore, the classes are not identically distributed across space nor across time. We observed this for hundreds of robots and randomly selected 2 for visual clarity. 

\textbf{\textit{Aggregate Statistics Across Real-World Driving Datasets:}} 

Next, we show such heterogeneity across a full dataset. We do not shuffle these datasets artificially - they are the original, naturally-occurring, real world adversarial weather and DeepDrive datasets. For each robot, we show that the distribution of classes it sees across its trajectory is very different from robot to robot in Appendix Figs. 4-6.

\textbf{\textit{Histogram of Pixel Differences: }} 

Our algorithm targets distributed collection of diverse classes of data. Since these robots observe diverse real-world images across space and time, naturally the distribution of raw pixels will be different. To prove this, we compute a histogram of pixel color values in the right panel of Fig. \ref{fig:robot_imgs}. First, in Fig. \ref{fig:robot_imgs}, we show that even for the same robot, the distribution is different for two randomly selected images which are 30 seconds apart. Then, in Fig.\ref{fig:robot_imgs} column 3, we compare the distribution of pixel intensities for different robots and different classes, which are indeed different. 

\subsubsection{Convergence to Non-Uniform Target Data Distributions}

We now illustrate that our algorithm can converge to any non-uniform target distribution.
Our algorithm does not assume the ``true'' distribution of classes is uniform. Instead, we let a roboticist choose \textit{any} desired ``target'' distribution of classes that they want to collect in the cloud for their analytics or ML use case. Our theory is general -- once the target distribution is chosen and fixed, we have a convex loss function between the current and target distribution, which guarantees convergence via Theorems 1-3. 

Often, in practice, a roboticist might want an equal distribution of classes to train a robust ML model, which was the case we showed in Figure 3 of the main paper. However, if the roboticist wants to create a model to especially focus on weak points (safety critical examples) for which we have few examples in the cloud, they can select a non-uniform target distribution. Fig. \ref{fig:skewed_plots} shows that our algorithm is able to easily converge to a non-uniform target distribution. In the barplot, the initial data distribution among robots is highly skewed (grey). The target distribution is pink and is clearly non-uniform. The left panel illustrates our Interactive and Oracle policies quickly converge as our theory guarantees. Moreover, the barplot shows the final data distribution achieved in the cloud under the Interactive and Oracle policies closely matches the target (in pink) and is much closer than the heuristic uniform sampling and greedy sampling benchmarks (orange and red). Finally, we repeat this experiment for another non-uniform target distribution in the bottom panel and see it also converges, as expected by our theory. 

}

\clearpage

\subsection{Performance Gap Between \Oracle \space and \Greedy}
\label{sec:performance_gap} 
Here, we prove the performance gap between \Oracle \space and \Interactive. All definitions with \textit{feasible} mean that they satisfy the constraints in Eq. \ref{eq:oracleopt}, \ref{eq:indvopt}, and \ref{eq:interactiveopt}. 

\begin{definition}[Feasible space of all robots]
A feasible space of all robots under \Oracle \space is the Minkowski sum of all robots' feasible spaces (see Def. \ref{def:fspace}). The feasible space of all robots is:
$$\fspace{ }{r}=\{\sum_{i=1}^{\ndevice}\faction{i}{r}~|~\forall i=1,...,\ndevice, ~\faction{i}{r} \in \fspace{i}{r}\}.$$
\label{def:allfspace}
\end{definition}

\begin{definition}[Feasible actions]
We define the optimal feasible action for robot $i$ in round $r$ as $\faction{i}{*,r}$. The symbol of $*$ can denote g or o, standing for \Greedy \space or \Oracle \space respectively. Also, we define the optimal action $\action{*,r}{i}$ with the left inverse of $\dataobmatrix{i}{r}$ as ${\dataobmatrix{i}{r}}^{\dagger}$. The actions are obtained by solving the optimization problems under different scenarios like \Oracle~or \Greedy as follows: 

\begin{subequations}
\begin{align}
\notag
\faction{i}{g,r} = \arg\min_{\faction{i}{r}} \quad & \loss{}{}(\clouddataset{r}+\faction{i}{r},  \datasettarget).\\ \notag
\text { subject to: } \quad & \faction{i}{r} \in \fspace{i}{r} \notag
\end{align}
\end{subequations}

\begin{equation*}
\begin{aligned}
 \nonumber
\faction{i}{o,r}=
\arg\min_{\faction{i}{r}} \quad & \loss{}{}(\clouddataset{r} + \sum_{i=1}^{\ndevice}\faction{i}{r}, \datasettarget).\\
\text{subject to:} \quad & \faction{i}{r} \in \fspace{i}{r}, \forall i=1,...,\ndevice
\end{aligned}
\end{equation*}

$$\action{g,r}{i}={\dataobmatrix{i}{r}}^{\dagger}\faction{i}{g,r}.$$
$$\action{o,r}{i}={\dataobmatrix{i}{r}}^{\dagger}\faction{i}{o,r}.$$
\label{def:faction}
\end{definition}

Now, we compare the optimal values of \Oracle~and \Greedy~and show that \Oracle~outperforms \Greedy. Then we formulate the performance bound between \Oracle~and \Greedy~with a lower bound.

\begin{definition}[Optimal values of loss functions]
For simplicity, we define the optimal values of loss functions under \Oracle~and \Greedy~policies as $\loss{ }{g}$ and $\loss{ }{o}$. 
These are the values of the loss functions resulting from feasible actions: 
$$\loss{ }{g,r} = \loss{ }{ }(\clouddataset{r}+\sum_{i=1}^{\ndevice}\faction{i}{g,r}, \datasettarget),$$
$$\loss{ }{o,r}  =\loss{ }{ }(\clouddataset{r}+\sum_{i=1}^{\ndevice}\faction{i}{o,r}, \datasettarget).$$
\label{def:loss}
\end{definition}

\begin{theorem}[\Oracle \space outperforms \Greedy]
The optimal value of \Greedy~policy $\loss{ }{g,r}$ is always greater than or equal to the optimal value of \Oracle~policy $\loss{ }{o,r}$, i.e. $\loss{ }{g,r}\geq\loss{ }{o,r}$.
\end{theorem}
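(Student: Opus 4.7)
The plan is to recognize this as an immediate feasibility argument: the Greedy tuple of feasible actions is a feasible point for the Oracle's joint optimization, so the Oracle, which minimizes over a superset, can only do weakly better.

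First I would set the stage by writing out the Oracle optimization (Eq.~\ref{eq:oracleopt}) in its feasible-action form:
\begin{equation*}
\loss{ }{o,r}=\min_{\faction{1}{r}\in\fspace{1}{r},\ldots,\faction{\ndevice}{r}\in\fspace{\ndevice}{r}} \loss{}{}\!\left(\clouddataset{r}+\sum_{i=1}^{\ndevice}\faction{i}{r},\datasettarget\right),
\end{equation*}
and noting that the Greedy solutions $\faction{1}{g,r},\ldots,\faction{\ndevice}{g,r}$ from Def.~\ref{def:faction} individually satisfy $\faction{i}{g,r}\in\fspace{i}{r}$ for every $i$, which are precisely the constraints of the Oracle problem.

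Next I would invoke the definition of minimum: evaluating the Oracle objective at the specific feasible tuple $(\faction{1}{g,r},\ldots,\faction{\ndevice}{g,r})$ yields a value that upper-bounds the minimum. By Def.~\ref{def:loss}, that value is exactly $\loss{}{}\bigl(\clouddataset{r}+\sum_{i=1}^{\ndevice}\faction{i}{g,r},\datasettarget\bigr)=\loss{ }{g,r}$. Hence $\loss{ }{o,r}\leq\loss{ }{g,r}$, which is the claim.

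There is no real obstacle here; the only thing that deserves a careful sentence is clarifying the distinction between the objective each Greedy robot \emph{minimizes} locally (which only contains its own term $\dataobmatrix{i}{r}\action{r}{i}$, Eq.~\ref{eq:dataset_indv}) and the common loss $\loss{ }{g,r}$ used for comparison (which aggregates all robots' contributions, Def.~\ref{def:loss}). Once this distinction is drawn, the inequality follows purely from the fact that the feasible set of the Oracle problem is the Cartesian product of the individual feasible sets $\fspace{i}{r}$, and hence contains every tuple assembled from feasible per-robot actions, Greedy's included.
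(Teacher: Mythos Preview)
Your proposal is correct and follows essentially the same feasibility argument as the paper: the aggregate of the \Greedy\ actions is feasible for the \Oracle\ problem, so the \Oracle\ minimum can only be weakly smaller. The only cosmetic difference is that the paper phrases feasibility through the Minkowski-sum space $\fspace{ }{r}$ (Def.~\ref{def:allfspace}) rather than the Cartesian product $\prod_i \fspace{i}{r}$, but since the objective depends only on $\sum_i \faction{i}{r}$ these formulations are equivalent.
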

\begin{proof}
By Def. \ref{def:allfspace} and \ref{def:faction}, $\sum_{i=1}^{\ndevice}\faction{i}{g,r} \in \fspace{ }{r}$ and $\sum_{i=1}^{\ndevice}\faction{i}{o,r} \in \fspace{ }{r}$. 
By Def. \ref{def:loss}, $\loss{ }{o}$ is the minimum of the loss function under feasible space $\fspace{ }{r}$, so all other loss functions generated by vectors in the same feasible space must be larger. Therefore, $\loss{ }{g,r}\geq\loss{ }{o,r}$.
\end{proof}

\begin{theorem} [Performance gap of \Oracle \space and \Greedy]
We use Def. \ref{def:loss} and the triangle inequality to show the performance gap between \Oracle \space and \Greedy.
$$
\begin{aligned}
0\leq \loss{ }{g,r} - \loss{ }{o,r} =
 & \|\datasettarget-\clouddataset{r}-\sum_{i=1}^{\ndevice}\faction{i}{g,r}\| -  \|\datasettarget-\clouddataset{r}-\sum_{i=1}^{\ndevice}\faction{i}{o,r}\| \\ 
\leq  & \|\sum_{i=1}^{\ndevice} (\faction{i}{o,r}-\faction{i}{g,r}) \|
\end{aligned}
$$
\label{theorem bound}
\end{theorem}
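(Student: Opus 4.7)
The plan is to chain three observations matching the three (in)equalities in the statement. The lower bound $0 \leq \loss{}{g,r} - \loss{}{o,r}$ is an immediate consequence of the preceding theorem showing that \Oracle\ outperforms \Greedy\ in optimal loss value, so this step is essentially free.

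For the middle equality, I would specialize the distance metric $\loss{}{}(\cdot,\datasettarget)$ to the norm-based form $\loss{}{}(\mathcal{D},\datasettarget) = \|\datasettarget - \mathcal{D}\|$ (the $L_2$-norm used throughout the experiments is the canonical instance). Plugging $\mathcal{D} = \clouddataset{r} + \sum_{i=1}^{\ndevice}\faction{i}{g,r}$ and $\mathcal{D} = \clouddataset{r} + \sum_{i=1}^{\ndevice}\faction{i}{o,r}$ into Def.~\ref{def:loss} directly produces the middle expression.

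For the upper bound I would apply the reverse triangle inequality $\|a\| - \|b\| \leq \|a-b\|$, setting $a = \datasettarget - \clouddataset{r} - \sum_{i=1}^{\ndevice}\faction{i}{g,r}$ and $b = \datasettarget - \clouddataset{r} - \sum_{i=1}^{\ndevice}\faction{i}{o,r}$. The common terms $\datasettarget$ and $\clouddataset{r}$ cancel in $a-b$, leaving $a-b = \sum_{i=1}^{\ndevice}(\faction{i}{o,r}-\faction{i}{g,r})$, which gives the stated bound.

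The main subtle point is the implicit assumption that the loss is a norm of the residual $\datasettarget-\mathcal{D}$. The paper states the framework handles any strictly convex distance, including the KL divergence, but KL does not satisfy the triangle inequality, so the bound as written applies cleanly only to norm-based losses (e.g.\ the $L_2$-norm actually used in the experiments). I would therefore either scope the statement to norm-based distances or, for KL, replace the final step with a Pinsker-style inequality to preserve the spirit of the result. Everything else reduces to routine substitution.
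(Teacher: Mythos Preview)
Your proposal is correct and matches the paper's approach exactly: the paper itself gives no separate proof block for this theorem, instead embedding the argument in the statement by pointing to Def.~\ref{def:loss} and the triangle inequality, which is precisely your three-step decomposition (lower bound from the preceding theorem, equality from the definition, upper bound from the reverse triangle inequality). Your additional remark about the implicit restriction to norm-based losses is a valid caveat that the paper does not explicitly flag.
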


As a special illustrative case, if all $\fspace{i}{r}$s are identical, then $\faction{i}{g,r}=\faction{i}{o,r}$. 
According to Thm. \ref{theorem bound}, $0\leq \loss{ }{g,r} - \loss{ }{o,r} \leq0$, hence $\loss{ }{g,r} - \loss{ }{o,r} =0$. \Greedy~is the optimal policy in this case, and there is no need to do cooperative data sharing. This could arise, for example, when all robots have the same vision model uncertainty and same local data distribution.

Next, we show an easy way to obtain the lower bound of \Oracle, using the Euclidean norm as an example. We create a new relaxation of Eq. \ref{eq:oracleopt} by removing the first constraint in Eq. \ref{eq:oracleopt}. That is, the number of the data-points uploaded need not be positive. In this case, since robots can upload \textit{negative} data-points, any combination of data-point is feasible as long as its sum is less than or equal to $\ncache$. Thus, confusion matrices of robots do not
matter here, and this mimics a case with no perceptual uncertainty. 

\begin{lemma}[Lower bound of \Oracle] \label{lemma:lowerbound}
The relaxation of Eq. \ref{eq:oracleopt} by removing the first constraint in Eq. \ref{eq:oracleopt} is the lower bound of \Oracle.
\end{lemma}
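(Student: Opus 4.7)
The plan is a one-line relaxation argument: removing a constraint can only enlarge the feasible set, and minimizing the same objective over a larger set can only decrease the optimal value. Concretely, let $\mathcal{F}_{\text{o}}$ denote the set of action tuples $(\action{r}{1},\ldots,\action{r}{\ndevice})$ satisfying all three constraints of Eq. \ref{eq:oracleopt}, and let $\mathcal{F}_{\text{rel}}$ denote the feasible set of the relaxation obtained by deleting the non-negativity constraint $\action{r}{i}\geq 0$. By construction every tuple in $\mathcal{F}_{\text{o}}$ also belongs to $\mathcal{F}_{\text{rel}}$, hence $\mathcal{F}_{\text{o}} \subseteq \mathcal{F}_{\text{rel}}$.

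Since both problems share the identical objective $\kl{r+1}$ and the cloud-dataset update in Eq. \ref{eq:dataset_oracle} is a deterministic affine function of the actions, monotonicity of the infimum with respect to set inclusion gives
\[
\min_{\mathcal{F}_{\text{rel}}}\kl{r+1} \;\leq\; \min_{\mathcal{F}_{\text{o}}}\kl{r+1}.
\]
The right-hand side is the \Oracle~optimum, so the left-hand side is a valid lower bound on it, which is exactly the claim. (Existence of the minimum on $\mathcal{F}_{\text{rel}}$ follows from strict convexity of $\mathcal{L}$ together with the linear cache constraints, but only the inequality is needed for the lemma.)

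There is no genuinely hard step in this argument; the only thing that requires a sentence of justification is \emph{why} the bound is informative rather than vacuous. Once $\action{r}{i}$ is allowed to have negative entries, each robot's reach $\dataobmatrix{i}{r}\action{r}{i}$ sweeps a full linear subspace of $\mathbb{R}^{\nclass}$ (all of $\mathbb{R}^{\nclass}$ when $\dataobmatrix{i}{r}$ has full column rank, as already assumed in Def. \ref{def:fspace}), so the structure of the confusion matrix no longer restricts which class-count vectors can be contributed to the cloud---only the $L_1$ cache budget $\mathbf{1}^\top \action{r}{i}\leq \ncache$ remains binding. This is precisely the operational interpretation stated after the lemma: the relaxation mimics an idealized setting with no perceptual uncertainty, and hence no policy operating under uncertainty can outperform it, making the \Lowerbound~a meaningful benchmark rather than a trivial one.
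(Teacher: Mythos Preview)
Your argument is correct and is essentially identical to the paper's: both proofs observe that deleting the non-negativity constraint enlarges the feasible set, so the minimum of the same objective over the relaxed set is no greater than the \Oracle\ optimum. The only addition in the paper is a closed-form expression for the relaxed optimum in the $L_2$ case, $\loss{}{low,r}=\max(\mathbf{1}^\top(\datasettarget-\clouddataset{r})-\ncache\times\ndevice,0)\sqrt{\nclass}$, which you omit but which is not needed to establish the lemma itself.
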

\begin{proof}
The original feasible set of the optimization problem is a subset of the new feasible set since we expand the set by removing a constraint from the original problem. 
Hence, we know the new optimal value is less than or equal to the original one.
Namely,

$$\begin{aligned}
\loss{ }{low,r}=\min & \quad \loss{}{} (\clouddataset{r}+\sum_{i=1}^{\ndevice}\faction{i}{r}, \datasettarget)\leq\loss{ }{o,r}.\\
\textit{subject to:} & \quad 1^T \cdot \faction{r}{i} \leq  N_{\text{cache}} ; \hspace{1mm} \forall \hspace{1mm} i = 1, \ldots, \ndevice
\end{aligned}$$

For $L_2$ norm, a closed-form solution of $\loss{ }{low,r}$ can be obtained by projecting the objective value to the feasible space:
$$\loss{ }{low,r}=\max(\mathbf{1}^\top(\datasettarget-\clouddataset{r})-\ncache\times\ndevice, 0)\times\sqrt{\nclass}.$$
\end{proof}

\subsection{Theorem \ref{theorem:converge_eventually}: While loop in Alg. \ref{alg:train} converges eventually} 
\label{sec:converge_eventually} 
We first show that there is a unique solution of \Interactive \space then show that Alg. \ref{alg:train} will converge to that solution. 

\begin{lemma}[Uniqueness of \Interactive \space solution]
The optimal solution of \Interactive \space $\sum_{i=1}^{\ndevice}\faction{i}{int,r}$ is unique.
\label{lemma:interactiveunique}
\end{lemma}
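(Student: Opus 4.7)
My plan is to exploit the potential-game structure. The loss $\kl{r+1}$ depends on the robots' decisions only through the aggregate feasible vector $S := \sum_{i=1}^{\ndevice} \dataobmatrix{i}{r}\action{r}{i} = \sum_{i=1}^{\ndevice} \faction{i}{r}$, and the set of attainable aggregates is the Minkowski sum $\fspace{}{r}$ (Def.~\ref{def:allfspace}), which is convex because each $\fspace{i}{r}$ is convex. Since $\mathcal{L}$ is strictly convex, the minimizer of $v \mapsto \mathcal{L}(\clouddataset{r} + v, \datasettarget)$ over $\fspace{}{r}$ is unique. The lemma will then follow as soon as I show that \emph{every} fixed point of the while loop produces an aggregate $S$ equal to this unique minimizer.

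To do that, I would first write down per-player optimality at a fixed point. At any fixed point of Alg.~\ref{alg:train}, robot $i$'s action $\faction{i}{int,r}$ solves the convex subproblem of minimizing $\mathcal{L}(\clouddataset{r} + \sum_{k \neq i}\faction{k}{int,r} + v, \datasettarget)$ over $v \in \fspace{i}{r}$. First-order convex optimality then gives the variational inequality
\begin{equation*}
\langle \nabla \mathcal{L}(\clouddataset{r} + S, \datasettarget),\; v - \faction{i}{int,r} \rangle \geq 0 \quad \forall v \in \fspace{i}{r}, \; \forall i = 1,\ldots,\ndevice.
\end{equation*}
Next, I would aggregate these $\ndevice$ inequalities: every $w \in \fspace{}{r}$ admits a decomposition $w = \sum_{i} v_i$ with $v_i \in \fspace{i}{r}$ (Def.~\ref{def:allfspace}), and summing the per-$i$ inequality evaluated at $v_i$ yields
\begin{equation*}
\langle \nabla \mathcal{L}(\clouddataset{r} + S, \datasettarget),\; w - S \rangle \geq 0 \quad \forall w \in \fspace{}{r},
\end{equation*}
which is exactly the first-order optimality condition for $S$ to minimize $v \mapsto \mathcal{L}(\clouddataset{r} + v, \datasettarget)$ over the convex set $\fspace{}{r}$. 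Strict convexity then pins $S$ down uniquely. Note that the individual summands $\faction{i}{int,r}$ may fail to be unique; the lemma only asserts uniqueness of their sum, which is exactly what this argument delivers.

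The hard part will be justifying the aggregation step that turns $\ndevice$ per-player best-response conditions into a single global optimality condition. This works precisely because the game is a potential game: the \emph{same} gradient $\nabla \mathcal{L}(\clouddataset{r} + S, \datasettarget)$ appears in every player's subproblem, so summing the inequalities over an arbitrary decomposition $w = \sum_i v_i$ preserves non-negativity. A minor technicality I would handle is the case where $\mathcal{L}$ is strictly convex but only subdifferentiable (so as to cover non-smooth strictly convex distances): I would then replace $\nabla \mathcal{L}$ by any selected subgradient and use the subgradient-based first-order characterization, which leaves the argument intact.
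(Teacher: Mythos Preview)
Your argument is correct, but it does more work than the paper's own proof and, in effect, merges the lemma with Theorem~\ref{theorem:orable_inter_same}. The paper's proof of this lemma is just the bare strict-convexity-over-a-convex-set argument: assume two distinct ``optimal'' aggregates in $\fspace{}{r}$, take their midpoint, and derive a strictly smaller loss, contradicting optimality. It does \emph{not}, at this point, justify why any \Interactive\ fixed point must achieve the minimum of $\mathcal{L}$ over $\fspace{}{r}$; that link is established only later, in the proof of Theorem~\ref{theorem:orable_inter_same}, by exactly the summing-of-variational-inequalities maneuver you perform here.

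So the difference is one of scope and rigor. The paper treats the lemma as the trivial statement ``the minimizer of a strictly convex function on a convex set is unique'' and defers the Nash-equilibrium-equals-global-minimizer part to Theorem~\ref{theorem:orable_inter_same}. You instead prove both at once: your aggregation step shows every fixed point of the while loop satisfies the global first-order condition on $\fspace{}{r}$, and strict convexity then pins down $S$. Your route is more self-contained for the lemma as stated (since ``optimal solution of \Interactive'' really means a fixed point of Alg.~\ref{alg:train}, not a priori a global minimizer), at the cost of front-loading the key idea of Theorem~\ref{theorem:orable_inter_same}. Your remark that only the sum is unique, not the individual $\faction{i}{int,r}$, is also a useful clarification that the paper leaves implicit.
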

\begin{proof}
We use proof by contradiction. First, we know $\sum_{i=1}^{\ndevice}\faction{i}{int,r} \in \fspace{ }{r}$, and $\fspace{ }{r}$ is a convex set.
If there exist more than two optimal solutions, we arbitrarily pick two of them and name them $\faction{ }{int,r}$ and $\faction{ }{'int,r}$.
Since $\loss{ }{ }(\cdot,\cdot)$ is strictly convex,

$$
\begin{aligned}
\loss{ }{ }(\clouddataset{r}+\frac{1}{2}[\faction{ }{int,r}+\faction{ }{'int,r}], \datasettarget)
& < \frac{1}{2}[\loss{}{}(\clouddataset{r}+\faction{ }{int,r}, \datasettarget)+ \loss{ }{ }(\clouddataset{r}+\faction{ }{'int,r},\datasettarget)]\\
& =\loss{ }{ }(\clouddataset{r}+\faction{ }{int,r}, \datasettarget).
\end{aligned}
$$
Then $\frac{1}{2}[\faction{ }{int,r}+\faction{ }{'int,r}]$ achieves a lower loss function and contradicts with our assumption that $\faction{ }{int,r}$ and $\faction{ }{'int,r}$ are optimal solutions. Hence, $\sum_{i=1}^{\ndevice}\faction{i}{int,r}$ is unique.
\end{proof}

\begin{theorem-non}[Convergence Eventually]
The \textit{while} loop (lines \ref{ln:while_loop} - \ref{ln:while_loop_end}) in Alg. \ref{alg:train} will eventually converge.  
\end{theorem-non}
\begin{proof}
For the proof of convergence, refer to Theorem 2 of \cite{AnalysisBRPotentialGames}. 
The potential function defined in \cite{AnalysisBRPotentialGames} corresponds to the negative value of our objective function, as stated in section \ref{app:whypotential}. 
The random revision law there is replaced by our deterministic order of updates in line \ref{ln:for2_loop}.
From Lemma \ref{lemma:interactiveunique}, we know there is only one unique solution, thus eventually Alg. \ref{alg:train} will converge to it. 
\end{proof}

Intuitively, a potential game with a strictly concave potential function will converge eventually since all players (robots in our case) strictly increase the potential function.

\subsection{Theorem \ref{theorem:orable_inter_same}: \Interactive \space converges to \Oracle} \label{sec:oracle_inter_same_sec}

We prove our proposed method \Interactive \space described in Eq. \ref{eq:interactiveopt} and Alg. \ref{alg:train} is equivalent to \Oracle \space as described in Thm. \ref{theorem:orable_inter_same}. 
We discuss two cases respectively: 
$\mathbf{1}^\top(\datasettarget-\clouddataset{r}) > \ndevice \times \ncache$ and 
$\mathbf{1}^\top(\datasettarget-\clouddataset{r}) \leq \ndevice \times \ncache$.
The first case holds for all rounds except for the last round that reaches the target distribution, upon which data collection terminates (see Thm. \ref{theorem:converge_one}). 
When $\mathbf{1}^\top(\datasettarget-\clouddataset{r}) > \ndevice\times\ncache$ holds, \Interactive \space  will certainly converge to \Oracle \space in one while loop execution (running Alg. \ref{alg:train} line \ref{ln:while_loop} - \ref{ln:while_loop_end} once). While in the last round, $\mathbf{1}^\top(\datasettarget-\clouddataset{r}) \leq \ndevice \times \ncache$ holds, and it takes more than one execution to converge.

\begin{lemma}[Uniqueness of \Oracle \space solution]
The optimal feasible action of \Oracle, namely $\sum_{i=1}^{\ndevice}\faction{i}{o,r}$, is unique.
\label{lemma:unique}
\end{lemma}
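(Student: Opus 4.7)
The plan is to prove uniqueness of the sum $\sum_{i=1}^{\ndevice}\faction{i}{o,r}$ by direct parallel with Lemma \ref{lemma:interactiveunique}, exploiting the fact that the \Oracle{} objective depends on the individual feasible actions only through their sum. First, I would recast \Oracle{} as an optimization over the aggregate variable $\faction{}{o,r} := \sum_{i=1}^{\ndevice}\faction{i}{o,r}$, which lives in the aggregate feasible set $\fspace{}{r}$ from Def. \ref{def:allfspace}. Since $\fspace{}{r}$ is the Minkowski sum of the convex sets $\fspace{i}{r}$ (each a convex hull of columns of $\dataobmatrix{i}{r}$ together with $\mathbf{0}$), it is itself convex.

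Next I would argue by contradiction, exactly as in Lemma \ref{lemma:interactiveunique}. Suppose there exist two distinct optimal aggregate feasible actions $\faction{}{o,r}$ and $\faction{}{'o,r}$ in $\fspace{}{r}$ achieving the same minimum value $\loss{}{o,r}$. By convexity of $\fspace{}{r}$, the midpoint $\tfrac{1}{2}(\faction{}{o,r}+\faction{}{'o,r})$ is also in $\fspace{}{r}$. By strict convexity of $\loss{}{}(\cdot,\datasettarget)$ in its first argument, we obtain
\begin{align*}
\loss{}{}\bigl(\clouddataset{r}+\tfrac{1}{2}[\faction{}{o,r}+\faction{}{'o,r}],\datasettarget\bigr)
&< \tfrac{1}{2}\bigl[\loss{}{}(\clouddataset{r}+\faction{}{o,r},\datasettarget)+\loss{}{}(\clouddataset{r}+\faction{}{'o,r},\datasettarget)\bigr] \\
&= \loss{}{o,r},
\end{align*}
contradicting the assumed optimality.

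I expect the only genuinely delicate step to be the \emph{reduction} from uniqueness of the sum to a statement about the aggregate decision variable: the individual $\faction{i}{o,r}$'s are generally \emph{not} unique (multiple decompositions of the same sum into feasible summands can coexist), but the lemma only asserts uniqueness of $\sum_i \faction{i}{o,r}$, so this non-uniqueness of components is harmless. Convexity of $\fspace{}{r}$ as a Minkowski sum, and strict convexity of $\loss{}{}$ as explicitly stated in Sec. \ref{sec:formulation}, then make the contradiction argument immediate, yielding uniqueness of the optimal aggregate feasible action and hence of $\sum_{i=1}^{\ndevice}\faction{i}{o,r}$.
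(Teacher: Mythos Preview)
Your proposal is correct and follows essentially the same approach as the paper: both argue by contradiction, using convexity of the aggregate feasible set $\fspace{}{r}$ and strict convexity of $\loss{}{}(\cdot,\datasettarget)$ to show that the midpoint of two putative optima would achieve a strictly lower loss. Your additional remarks---that $\fspace{}{r}$ is convex as a Minkowski sum of convex sets, and that only the sum (not the individual $\faction{i}{o,r}$) is claimed unique---are welcome clarifications the paper leaves implicit.
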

\begin{proof}
The proof is similar to Lemma \ref{lemma:interactiveunique}.
We use proof by contradiction. First, we know $\sum_{i=1}^{\ndevice}\faction{i}{o,r} \in \fspace{ }{r}$, and $\fspace{ }{r}$ is a convex set.
If there exist more than two optimal solutions, we arbitrarily pick two of them and name them $\faction{ }{ }$ and $\faction{ }{'o,r}$.
Since the loss $\loss{ }{ }(\cdot,\cdot)$ is a strictly convex function,

$$
\begin{aligned}
\loss{ }{ }(\clouddataset{r}+\frac{1}{2}[\faction{ }{o,r}+\faction{ }{'o,r}], \datasettarget)
& < \frac{1}{2}[\loss{ }{ }(\clouddataset{r}+\faction{ }{o,r}, \datasettarget)+ \loss{ }{ }(\clouddataset{r}+\faction{ }{'o,r}, \datasettarget)]\\
& =\loss{ }{ }(\clouddataset{r}+\faction{ }{o,r}, \datasettarget).
\end{aligned}
$$
Then $\frac{1}{2}[\faction{ }{o,r}+\faction{ }{'o,r}]$ achieves a lower loss function and contradicts with our assumption that $\faction{ }{o,r}$ and $\faction{ }{'o,r}$ are optimal solutions. Hence, $\sum_{i=1}^{\ndevice}\faction{i}{o,r}$ is unique.
\end{proof}

\begin{theorem-non}[\Interactive \space converges to \Oracle]
The while loop in Alg. \ref{alg:train} line \ref{ln:while_loop} - \ref{ln:while_loop_end} is guaranteed to return action $\action{int,r}{i}$ that is equal to the \Oracle \space policy's action, $\action{o,r}{i}$. 
$\action{int,r}{i}$ denotes the action of robot $i$ at the end of round $r$ using the \Interactive \space policy. Similarly, $ \action{o,r}{i}$ denotes the action of \Oracle \space policy.
\end{theorem-non}

\begin{proof}
The convergence (optimality) conditions for the convex optimization problems of all robots are of this form with the gradient of the loss function $\nabla_{\faction{i}{int,r*}}\|\datasettarget-\clouddataset{r}-\sum_{j=1}^{\ndevice}\faction{i}{int,r*}\|$:
$$
\begin{aligned}
&\forall i, \faction{i}{ }\in\fspace{i}{r},\\
&(\faction{i}{ }-\faction{i}{int,r*})^\top\nabla_{\faction{i}{int,r*}}\|\datasettarget-\clouddataset{r}-\faction{i}{int,r*}-\sum_{i\not=j,j=1}^{\ndevice}\faction{j}{int,r*}\|\\
=&(\faction{i}{ }-\faction{i}{int,r*})^\top\nabla_{\faction{i}{int,r*}}\|\datasettarget-\clouddataset{r}-\sum_{j=1}^{\ndevice}\faction{i}{int,r*}\|\geq 0.
\end{aligned}
$$
By the chain rule, 
$$
\begin{aligned}
&\nabla_{\faction{i}{int,r*}}\|\datasettarget-\clouddataset{r} - \sum_{j=1}^{\ndevice}\faction{j}{int,r*}\|=
\nabla_{\sum_{j=1}^{\ndevice}\faction{j}{int,r*}}\|\datasettarget-\clouddataset{r} - \sum_{j=1}^{\ndevice}\faction{j}{int,r*}\|.
\end{aligned}
$$
Thus, summing up the optimality conditions of all robots, we get:
$$
\begin{aligned}
&\sum_{i=1}^{\ndevice}
(\faction{i}{ }-\faction{i}{int,r*})^\top\nabla_{\faction{i}{int,r*}}\|\datasettarget-\clouddataset{r}-\sum_{j=1}^{\ndevice}\faction{j}{int,r*}\|\\
=&(\sum_{i=1}^{\ndevice}\faction{i}{ }-\sum_{i=1}^{\ndevice}\faction{i}{int,r*})^\top
\nabla_{\sum_{j=1}^{\ndevice}\faction{i}{int,r*}}\|\datasettarget-\clouddataset{r}-\sum_{j=1}^{\ndevice}\faction{i}{int,r*}\|\geq 0.
\end{aligned}
$$
This implies the optimality condition of \Oracle \space is:
$$
\begin{aligned}
&\forall \sum_{i=1}^{\ndevice}\faction{i}{ }\in\fspace{}{r},\;\;
(\sum_{i=1}^{\ndevice}\faction{i}{ }-\sum_{i=1}^{\ndevice}\faction{i}{o,r})^\top
\nabla_{\sum_{j=1}^{\ndevice}\faction{i}{o,r}}\|\datasettarget-\clouddataset{r}-\sum_{j=1}^{\ndevice}\faction{i}{o,r}\|\geq 0.
\end{aligned}
$$

We know there is only one unique solution of \Oracle \space  from Lemma \ref{lemma:unique}, so \Interactive \space  will converge to \Oracle \space in Alg. \ref{alg:train} line \ref{ln:while_loop} - \ref{ln:while_loop_end}, and $$\action{int,r}{i} = \action{o,r}{i} = {\dataobmatrix{i}{r}}^{\dagger} \faction{i}{o,r}.$$
\end{proof}

\begin{lemma}[Sum of feasible actions lies on a hyperplane]
For \Oracle \space and \Greedy, the sum of action lies on the same hyperplane $\mathbf{1}^\top v=\ndevice \times \ncache$ when $\mathbf{1}^\top(\datasettarget-\clouddataset{r}) > \ndevice \times \ncache$.
\label{lemma:hyperplane}
\end{lemma}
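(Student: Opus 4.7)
I plan to argue by contradiction: suppose that at the optimum the sum $V^{\star} := \sum_{i=1}^{\ndevice} \faction{i}{*,r}$ (for $* \in \{g, o\}$) satisfies $\mathbf{1}^{\top} V^{\star} < \ndevice \times \ncache$ strictly, and construct a small feasible perturbation of some robot's action that strictly decreases the loss, contradicting optimality.

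The opening observation is that because every column $p^r_{i,j}$ of $\dataobmatrix{i}{r}$ is a probability vector, $\mathbf{1}^{\top} \faction{i}{r} = \mathbf{1}^{\top} \dataobmatrix{i}{r} \action{r}{i} = \mathbf{1}^{\top} \action{r}{i}$ for every $i$, so the hyperplane $\mathbf{1}^{\top} v = \ndevice \times \ncache$ coincides with the locus where every robot saturates its cache budget $\mathbf{1}^{\top} \action{r}{i} = \ncache$. The contradiction hypothesis then forces some robot $k$ to have strictly slack cache $\mathbf{1}^{\top} \action{*,r}{k} < \ncache$. Defining the residual $r^{\star} := \datasettarget - \clouddataset{r} - V^{\star}$, the lemma's hypothesis together with the contradiction assumption yields $\mathbf{1}^{\top} r^{\star} > 0$, so the cloud dataset is aggregate-deficient relative to the target. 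For \Greedy\ the analogous calculation applied to each robot's individual subproblem gives a strictly positive individual residual, since the hypothesis implies $\mathbf{1}^{\top}(\datasettarget - \clouddataset{r}) > \ncache$.

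The substantive step is the perturbation: replace $\action{*,r}{k}$ by $\action{*,r}{k} + \epsilon e_j$ for small $\epsilon > 0$ and a suitably chosen index $j$. Non-negativity is automatic and the cache slack ensures feasibility for all $\epsilon$ sufficiently small. The induced change in $V^{\star}$ is $\epsilon\, p^r_{k,j}$, and the leading-order change in the squared loss equals $-2\epsilon (r^{\star})^{\top} p^r_{k,j}$. To make this strictly negative, I would decompose $\mathbf{1} = \sum_{j=1}^{\nclass} c_j\, p^r_{k,j}$ in the basis of columns of $\dataobmatrix{k}{r}$ (available by the linear-independence assumption) and obtain $\mathbf{1}^{\top} r^{\star} = \sum_j c_j (p^r_{k,j})^{\top} r^{\star} > 0$, which forces at least one summand to be strictly positive and pins down the index $j$ to perturb.

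The principal obstacle is exactly this last step. The positivity of $\mathbf{1}^{\top} r^{\star}$ does not by itself produce a column with positive inner product against $r^{\star}$, because the signs of the basis coefficients $c_j$ are not all guaranteed to be non-negative from linear independence alone. Closing this gap amounts to the mild structural condition that the uniform distribution $\mathbf{1}/\nclass$ lies in the convex hull of the columns of $\dataobmatrix{k}{r}$, which is consistent with any reasonable classifier. With this in hand, the perturbation strictly decreases the loss and contradicts the optimality of $\action{*,r}{k}$ in either the \Oracle\ joint problem (with all other robots' actions held fixed) or the \Greedy\ subproblem of robot $k$, yielding the desired hyperplane equality.
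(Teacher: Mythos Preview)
Your approach differs from the paper's. The paper gives a one-line geometric projection argument: the hypothesis $\mathbf{1}^\top(\datasettarget-\clouddataset{r}) > \ndevice\,\ncache$ places the point being projected strictly outside the feasible set $\fspace{}{r}$ (since every $V\in\fspace{}{r}$ satisfies $\mathbf{1}^\top V\le\ndevice\,\ncache$), so the closest feasible point lies on the boundary, and the paper then simply asserts that this boundary point sits on the face $\mathbf{1}^\top V=\ndevice\,\ncache$. For \Greedy\ the same reasoning is applied to each $\fspace{i}{r}$ separately and the saturated per-robot actions are summed. Your contradiction-and-perturbation argument instead unpacks the first-order optimality condition directly and tracks the cache slack robot by robot; this is a genuinely different, more analytic route.

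What your route buys is that it surfaces the one step the paper glosses over. You correctly flag that $\mathbf{1}^\top r^\star>0$ does not, from linear independence of the columns alone, force some column to satisfy $(r^\star)^\top p^r_{k,j}>0$: one can build linearly independent stochastic columns all concentrated near a single simplex vertex so that every such inner product is negative while the residual has positive coordinate sum. This is exactly the gap hidden in the paper's jump from ``on the boundary of $\fspace{}{r}$'' to ``on the hyperplane face $\mathbf{1}^\top V=\ndevice\,\ncache$'': the boundary of $\fspace{}{r}$ also contains faces coming from the nonnegativity constraints $\action{r}{i}\ge 0$, and nothing in the stated hypotheses rules out the projection landing on one of those instead. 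Your proposed patch---that $\mathbf{1}/\nclass$ lies in the convex hull of the columns of $\dataobmatrix{k}{r}$, making the coefficients $c_j$ nonnegative---is a clean and mild sufficient condition that closes both your argument and the paper's; the paper simply does not state it.
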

\begin{proof}
Since $\datasettarget$ lies outside $\fspace{ }{r}$, the closest point to it must lie on the boundary of the convex set. Thus, $\sum_{i=1}^{\ndevice} \faction{i}{o,r}$ lies at the edge of $\fspace{ }{r}$, the hyperplane $\mathbf{1}^\top v=\ndevice \times \ncache$. Thus, 
$$\mathbf{1}^\top \sum_{i=1}^{\ndevice} \faction{i}{o,r}
=\ndevice \times \ncache.$$

Every shared action in Alg. \ref{alg:train} line \ref{ln:share_init_actions} is the \Greedy \space action $\action{g,r}{i}$ and the corresponding feasible action $\faction{i}{g,r}$ lies at the edge of $\fspace{i}{r}$, the hyperplane $\mathbf{1}^\top v= \ncache$ for the same reason as above. 
Thus, we know: 
$$\mathbf{1}^\top\sum_{i=1}^{\ndevice} \faction{i}{g,r}
=\sum_{i=1}^{\ndevice} \mathbf{1}^\top \faction{i}{g,r}
=\ndevice\times\ncache$$
The sum of greedy feasible actions also lies on the same hyperplane $\mathbf{1}^\top v= \ndevice \times \ncache$.
\end{proof}

Now, using the fact that the sum of feasible actions lies on the same hyperplane from Lemma \ref{lemma:hyperplane}, we can show that the while loop in Alg. \ref{alg:train} line \ref{ln:while_loop} - \ref{ln:while_loop_end} will terminate in one iteration.

\subsection{Theorem \ref{theorem:converge_one}: While loop converges in one iteration} \label{sec:one_iteration} 

\begin{theorem-non}[Convergence in one iteration] 
For cases when the total number of uploadable data-points is less than the difference between target cloud dataset $\datasettarget$ and current cloud dataset $\clouddataset{r}$, namely
$\mathbf{1}^\top(\datasettarget-\clouddataset{r}) > \ndevice \times \ncache$, the while loop in Alg. \ref{alg:train} line \ref{ln:while_loop} - \ref{ln:while_loop_end} will terminate in one iteration.
\end{theorem-non}
\begin{proof}
Since the optimal solution of \Oracle \space is unique from Lemma \ref{lemma:unique}, we know the update direction of solution (the vector from the previous solution pointing to  the new solution) in the first optimization execution in line \ref{ln:while_loop} - \ref{ln:while_loop_end} is the vector pointing from the \Greedy\space feasible solution $\sum_{i=1}^{\ndevice}\faction{i}{g,r}$ to the \Oracle \space  solution $\sum_{i=1}^{\ndevice}\faction{i}{o,r}$. Both points lie on the hyperplane
    $\mathbf{1}^\top v=\ndevice\times\ncache$ by Lemma \ref{lemma:hyperplane}. 
Also, all feasible spaces in Eq. \ref{eq:interactiveopt} intersect with the hyperplane $\mathbf{1}^\top v=\ndevice\times\ncache$, so all update directions in line \ref{ln:while_loop} - \ref{ln:while_loop_end} during the while loop lie on the same hyperplane until the solutions converge. 

Let the solution after the first iteration of the while loop be $v_r^\text{iter}$ and the solutions of each for loop execution before it be $$v_r^{\text{for},j}, \text{for } j=1,...,\ndevice.$$
Note that, 
$$\sum_{i=1}^{\ndevice} \faction{i}{o,r} \in \{\faction{ }{ }:\mathbf{1}^\top v=\ndevice\times\ncache\},$$
$$ v_\text{iter} \in \{\faction{ }{ }:\mathbf{1}^\top v=\ndevice\times\ncache\},$$
$$v_r^{\text{for},j} \in \{\faction{ }{ }:\mathbf{1}^\top v=\ndevice\times\ncache\}, \text{for } j=1,...,\ndevice,$$
since all the updates happen on the hyperplane.

We then assume $v_\text{iter}$ is not the solution of \Oracle \space, $\sum_{i=1}^{\ndevice} \faction{i}{o,r}$, and prove it is wrong by contradiction. 
If they are not identical, let the difference between solutions of \Oracle \space and the first iteration be $$\Delta v= \sum_{i=1}^{\ndevice} \faction{i}{o,r}- v_\text{iter}\neq0.$$

    $\Delta v$ is the same direction as all update directions in line \ref{ln:while_loop} - \ref{ln:while_loop_end}. All $v_r^{\text{for},j} +\alpha\Delta v$ are infeasible ($\not\in \fspace{j}{r}$) for any $j$ and an arbitrary small step size of update $\alpha>0$ because all $v_r^{\text{for},j}$ are already optimal solutions that cannot move further in the update directions. 
Hence, $$\sum_{i=1}^{\ndevice}\faction{i}{o,r}= v_\text{iter} + \Delta v \not \in \fspace{ }{r}.$$
This contradicts with Def. \ref{def:faction}, so we prove that $\Delta v = 0$ and $\sum_{i=1}^{\ndevice} \faction{i}{o,r} = v_\text{iter}$.
In other words, the while loop in Alg. \ref{alg:train} line \ref{ln:while_loop} - \ref{ln:while_loop_end} will terminate in one iteration.
\end{proof}

\subsection{Proposition \ref{prop:total_msgs}: The total number of messages passed between the robots.} 
\label{sec:total_num_messages} 

We first calculate the number of messages passed in every iteration of Alg. \ref{alg:train} using an \textit{un-optimized} method of communication that requires $O(\ndevice^2)$ messages. Then, we show a simple, optimized method that requires only $O(\ndevice)$ messages per loop. 

\begin{proposition}[Total Number of Messages] 
The total number of messages passed between the robots in line \ref{ln:share_init_actions} will be $\ndevice^2-\ndevice$. While in each iteration (for loop line \ref{ln:for2_loop} - \ref{ln:for2_loop_end}), the number is also $\ndevice^2-\ndevice$.
\label{prop:total_msgs}
\end{proposition}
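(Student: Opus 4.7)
The plan is a direct counting argument based on the communication pattern in Alg. \ref{alg:train}. I would first carefully specify the communication model: a single ``message'' is a point-to-point transmission of one robot's feasible action $\dataobmatrix{i}{r}\action{r}{i}$ from one sender to one receiver, so a broadcast from robot $i$ to every other robot costs $\ndevice - 1$ messages, not one. With this convention fixed, the statement reduces to counting how many sender/receiver pairs are used in each sharing phase.

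For line \ref{ln:share_init_actions}, I would argue that every one of the $\ndevice$ robots, having initialized its action via the \Greedy{} policy in lines \ref{ln:for1_loop}--\ref{ln:for1_loop_end}, must deliver its product $\dataobmatrix{i}{r}\action{r}{i}$ to each of the other $\ndevice - 1$ robots so that they have complete information before entering the while loop. Summing over senders gives $\ndevice(\ndevice-1) = \ndevice^{2}-\ndevice$ messages, matching the claim.

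For a single execution of the for loop in lines \ref{ln:for2_loop}--\ref{ln:for2_loop_end}, I would observe that each robot $i$ in turn performs the optimization in Eq. \ref{eq:interactiveopt} and then, via line \ref{ln:share_calc_actions}, transmits its updated $\dataobmatrix{i}{r}\action{r}{i}$ to all other robots. Since every robot acts once per iteration of the while loop and each broadcast again costs $\ndevice - 1$ point-to-point messages, the total per iteration is $\ndevice(\ndevice-1) = \ndevice^{2}-\ndevice$, identical to the initialization count.

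The main subtlety, rather than any hard technical obstacle, is fixing the communication convention: if one instead allowed a true broadcast primitive, the count would drop to $O(\ndevice)$ per round, so I would be explicit that the stated $\ndevice^{2}-\ndevice$ figure reflects the un-optimized, pairwise unicast model (consistent with the $O(\ndevice^{2})$ discussion preceding the proposition), and note that Thm.\ \ref{theorem:converge_one} bounds the number of such iterations to one in all non-terminal rounds, so the overall inter-robot communication cost per round $r$ is $2(\ndevice^{2}-\ndevice)$ messages under the stated condition $\mathbf{1}^{\top}(\datasettarget - \clouddataset{r}) > \ndevice \times \ncache$.
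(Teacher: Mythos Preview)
Your proposal is correct and takes essentially the same approach as the paper: both argue that each of the $\ndevice$ robots sends its feasible action to the other $\ndevice-1$ robots, yielding $\ndevice(\ndevice-1)=\ndevice^{2}-\ndevice$ messages for line~\ref{ln:share_init_actions} and the same count per for-loop iteration. Your write-up is in fact more explicit than the paper's about the unicast convention and adds useful context tying in Thm.~\ref{theorem:converge_one}, but the underlying counting argument is identical.
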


\begin{proof}
Each robot $i$ shares its decision $\dataobmatrix{i}{r} \action{r}{i}$ with $(\ndevice-1)$ other robots, and this process repeats $\ndevice$ times for all robots. Hence, the total numbers of messages passed between the robots in line \ref{ln:share_init_actions} and for loop line \ref{ln:for2_loop} - \ref{ln:for2_loop_end} are both $$\ndevice\times(\ndevice-1)=\ndevice^2-\ndevice.$$
\end{proof}

\subsubsection{An Optimized Method with only $O(\ndevice)$ messages}
\label{sec:opt_comm}

Our key insight to reduce communication, shown in Fig. \ref{fig:message_pass}, is that robots only need to share their individual actions initially and afterwards can only share \textit{sums} of their actions with each other.

\begin{figure*}[t]
    \includegraphics[width=1.0\columnwidth]{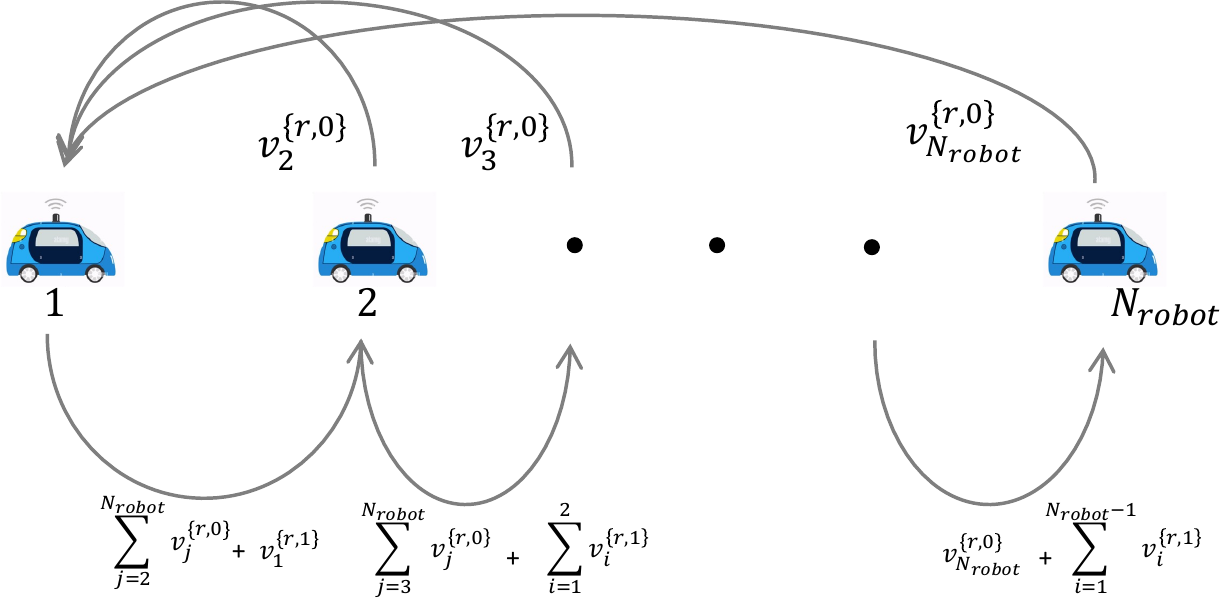}
    \vspace{1em}
    \caption{\small{\textbf{Communication Optimization in Alg. 1 While loop: } First, each robot shares it greedy actions (grey arrows facing left) $v^{r,0}_i$. Then, each robot passes the \textit{sum} of optimized actions $v^{r,1}_i$ and other robots' actions $v^{r,0}_j$ as opposed to \textit{individual} actions, leading to $O(\ndevice)$ messages.}} 
     \label{fig:message_pass}
\end{figure*}

As shown in Fig. \ref{fig:message_pass}, let us denote a feasible action on round $r$  by $v^r_i = P^r_ia^r_i$. Further, let us index iterations of communication \textit{within} a loop by $k$, meaning $v^{r,0}_i$ is the \textit{initial greedy} action from robot $i$ at round $r$ (i.e., at iteration $0$). After solving Prob. 3 once and multiplying by $P^r_i$, the next action is given by $v^{r,1}_i$. 
As shown in Fig. \ref{fig:message_pass}, all robots send their initial greedy action $v^{r,0}_j$ to robot $1$ for $j=2\ldots\ndevice$. This amounts to $\ndevice-1$ messages sent. Then, robot $1$ solves Prob. 3, assuming all other robots' actions are fixed, to generate $v^{r,1}_1$. The sum of the new optimized action $v^{r,1}_1$ and previous unoptimized actions $\sum_{j=2}^{\ndevice} v^{r,0}_j$ is sent to robot $2$. Robot $2$ then subtracts its current greedy action $v^{r,0}_2$ in Eq.
\ref{eq:dataset_inter} and solves Prob. 3 again. The process repeats until we reach robot $\ndevice$, leading to another $\ndevice-1$ messages. As such, for each while loop iteration, we only need $(\ndevice-1) + (\ndevice-1) = 2(\ndevice-1)$ messages, so $O(\ndevice)$ messages as opposed to $O(\ndevice^2)$.

\end{document}